\newtheorem{theorem}{Theorem}
\newtheorem{proposition}{Proposition}
\DeclareMathOperator*{\argmin}{arg\,min}
\DeclareMathOperator*{\argmax}{arg\,max}
\newcommand{\appropto}{\mathrel{\vcenter{
  \offinterlineskip\halign{\hfil$##$\cr
    \propto\cr\noalign{\kern2pt}\sim\cr\noalign{\kern-2pt}}}}}
\icmltitlerunning{On the Sampling Problem for Kernel Quadrature}
\begin{document}

\twocolumn[

\icmltitle{On the Sampling Problem for Kernel Quadrature}

\begin{icmlauthorlist}
\icmlauthor{Fran\c{c}ois-Xavier Briol}{war,imp}
\icmlauthor{Chris J. Oates}{newc,ati}
\icmlauthor{Jon Cockayne}{war}
\icmlauthor{Wilson Ye Chen}{uts}
\icmlauthor{Mark Girolami}{imp,ati}
\end{icmlauthorlist}

\icmlaffiliation{war}{University of Warwick, Department of Statistics.}
\icmlaffiliation{imp}{Imperial College London, Department of Mathematics.}
\icmlaffiliation{newc}{Newcastle University, School of Mathematics and Statistics}
\icmlaffiliation{uts}{University of Technology Sydney, School of Mathematical and Physical Sciences}

\icmlaffiliation{ati}{The Alan Turing Institute for Data Science}

\icmlcorrespondingauthor{Fran\c{c}ois-Xavier Briol}{f-x.briol@warwick.ac.uk}

\icmlkeywords{Kernel Quadrature, Numerical Integration, Monte Carlo Methods,Sequential Monte Carlo}

\vskip 0.3in
]



\printAffiliationsAndNotice{} 

\begin{abstract} 
The standard Kernel Quadrature method for numerical integration with random point sets (also called Bayesian Monte Carlo) is known to converge in root mean square error at a rate determined by the ratio $s/d$, where $s$ and $d$ encode the smoothness and dimension of the integrand.
However, an empirical investigation reveals that the rate constant $C$ is highly sensitive to the distribution of the random points.
In contrast to standard Monte Carlo integration, for which optimal importance sampling is well-understood, the sampling distribution that minimises $C$ for Kernel Quadrature does not admit a closed form.
This paper argues that the practical choice of sampling distribution is an important open problem. One solution is considered; a novel automatic approach based on adaptive tempering and sequential Monte Carlo. Empirical results demonstrate a dramatic reduction in integration error of up to 4 orders of magnitude can be achieved with the proposed method.
\end{abstract}

\section{INTRODUCTION}

Consider approximation of the Lebesgue integral
\begin{eqnarray}
\Pi(f) = \int_{\mathcal{X}} f \mathrm{d}\Pi \label{intdef}
\end{eqnarray}
where $\Pi$ is a Borel measure defined over $\mathcal{X} \subseteq \mathbb{R}^d$ and $f$ is Borel measurable.
Define $\mathcal{P}(f)$ to be the set of Borel measures $\Pi'$ such that $f \in L_2(\Pi')$, meaning that $\|f\|_{L_2(\Pi')}^2 = \int_{\mathcal{X}} f^2 \mathrm{d}\Pi' < \infty$, and assume $\Pi \in \mathcal{P}(f)$.
In situations where $\Pi(f)$ does not admit a closed-form, Monte Carlo (MC) methods can be used to estimate the numerical value of Eqn. \ref{intdef}.
A classical research problem in computational statistics is to reduce the MC estimation error in this context, where the integral can, for example, represent an expectation or marginalisation over a random variable of interest.

The default MC estimator comprises of 
$$
\hat{\Pi}_{\text{MC}}(f) = \frac{1}{n} \sum_{j=1}^n f(\bm{x}_j),
$$
where $\bm{x}_j$ are sampled identically and independently (i.i.d.) from $\Pi$.
Then we have a root mean square error (RMSE) bound
$$
\sqrt{\mathbb{E} [\hat{\Pi}_{\text{MC}}(f) - \Pi(f) ]^2} \leq \frac{C_{\text{MC}}(f;\Pi)}{\sqrt{n}},
$$
where $C_{\text{MC}}(f;\Pi) = \text{Std}(f ; \Pi)$ and the expectation is with respect to the joint distribution of the $\{\bm{x}_j\}_{j=1}^n$.
For settings where the Lebesgue density of $\Pi$ is only known up to normalising constant, Markov chain Monte Carlo (MCMC) methods can be used; the rate-constant $C_{\text{MC}}(f;\Pi)$ is then related to the asymptotic variance of $f$ under the Markov chain sample path.

Considerations of computational cost place emphasis on methods to reduce the rate constant $C_{\text{MC}}(f;\Pi)$.
For the MC estimator, this rate constant can be made smaller via importance sampling (IS):
$f \mapsto f \cdot \mathrm{d} \Pi / \mathrm{d} \Pi'$
where an optimal choice $\Pi' \in \mathcal{P}(f \cdot \mathrm{d}\Pi / \mathrm{d} \Pi')$, that minimises $\text{Std}(f \cdot \mathrm{d}\Pi / \mathrm{d} \Pi' ; \Pi')$, is available in explicit closed-form \citep[see][Thm. 3.3.4]{Robert2013}.
However, the RMSE remains asymptotically gated at $O(n^{-1/2})$.

The default Kernel Quadrature (KQ) estimate comprises of
\begin{eqnarray}
\hat{\Pi}(f) = \sum_{j=1}^n w_j f(\bm{x}_j), \label{kerquad}
\end{eqnarray}
where the $\bm{x}_j \sim \Pi'$ are independent (or arise from a Markov chain) and $\text{supp}(\Pi) \subseteq \text{supp}(\Pi')$.
In contrast to MC, the weights $\{w_j\}_{j=1}^n$ in KQ are in general non-uniform, real-valued and depend on $\{\bm{x}_j\}_{j=1}^n$.
The KQ nomenclature derives from the (symmetric, positive-definite) kernel $k : \mathcal{X} \times \mathcal{X} \rightarrow \mathbb{R}$ that is used to construct an interpolant $\hat{f}(\bm{x}) = \sum_{j=1}^n \beta_j k(\bm{x},\bm{x}_j)$ such that $\hat{f}(\bm{x}_j) = f(\bm{x}_j)$ for $j = 1,\dots,n$.
The weights $w_j$ in Eqn. \ref{kerquad} are implicitly defined via the equation $\hat{\Pi}(f) = \int_{\mathcal{X}} \hat{f} \mathrm{d} \Pi$.
The KQ estimator is identical to the posterior mean in Bayesian Monte Carlo \citep{OHagan1991,Rasmussen2002}, and its relationship with classical numerical quadrature rules has been studied \citep{Diaconis1988,Sarkka2015}. 

Under regularity conditions, \citet{Briol2016} established the following RMSE bound for KQ:
\begin{eqnarray*}
\sqrt{\mathbb{E} [\hat{\Pi}(f) - \Pi(f) ]^2} \leq \frac{C(f; \Pi')}{n^{s/d - \epsilon}}, \quad (s > d/2) \label{eq:BQerror}
\end{eqnarray*}
where both the integrand $f$ and each argument of the kernel $k$ admit continuous mixed weak derivatives of order $s$ and $\epsilon > 0$ can be arbitrarily small.
An information-theoretic lower bound on the RMSE is $O(n^{-s/d-1/2})$ \citep{Bakhvalov1959}.
The faster convergence of the RMSE, relative to MC, can lead to improved precision in applications.
Akin to IS, the samples $\{\bm{x}_j\}_{j=1}^n$ need not be draws from $\Pi$ in order for KQ to provide consistent estimation (since $\Pi$ is encoded in the weights $w_j$).
Importantly, KQ can be viewed as post-processing of MC samples; the kernel $k$ can be reverse-engineered (e.g. via cross-validation) and does not need to be specified up-front.

One notable disadvantage of KQ methods is that little is known about how the rate constant $C(f; \Pi')$ depends on the choice of sampling distribution $\Pi'$.
In contrast to IS, no general closed-form expression has been established for an optimal distribution $\Pi'$ for KQ (the technical meaning of `optimal' is defined below).
Moreover, limited practical guidance is available on the selection of the sampling distribution \citep[an exception is][as explained in Sec. \ref{subsec:estres}]{Bach2015} and in applications it is usual to take $\Pi' = \Pi$.
\\
This choice is convenient but leads to estimators that are not efficient, as we demonstrate in dramatic empirical examples in Sec. \ref{subsec:motivation}.

The main contributions of this paper are twofold. 
First, we formalise the problem of optimal sampling for KQ as an important and open challenge in computational statistics.
To be precise, our target is an optimal sampling distribution for KQ, defined as
\begin{eqnarray}
\Pi^* \in  \argmin_{\Pi'} \sup_{f \in \mathcal{F}} \sqrt{\mathbb{E} [\hat{\Pi}(f) - \Pi(f) ]^2} . \label{objective function}
\end{eqnarray}
for some functional class $\mathcal{F}$ to be specified. In general a (possibly non-unique) optimal $\Pi^*$ will depend on $\mathcal{F}$ and, unlike for IS, also on the kernel $k$ and the number of samples $n$.

Second, we propose a novel and automatic method for selection of $\Pi'$ that is rooted in approximation of the unavailable $\Pi^*$.
In brief, our method considers candidate sampling distributions of the form $\Pi' = \Pi_0^{1-t} \Pi^t$ for $t \in [0,1]$ and $\Pi_0$ a reference distribution on $\mathcal{X}$.
The exponent $t$ is chosen such that $\Pi'$ minimises an empirical upper bound on the RMSE.
The overall approach is facilitated with an efficient sequential MC (SMC) sampler and called $\texttt{SMC-KQ}$.
In particular, the approach (i) provides practical guidance for selection of $\Pi'$ for KQ, (ii) offers robustness to kernel mis-specification, and (iii) extends recent work on computing posterior expectations with kernels obtained using Stein's method \citep{Oates2017}.

The paper proceeds as follows:
Empirical results in Sec. \ref{sec:background} reveal that the RMSE for KQ is highly sensitive to the choice of $\Pi'$.
The proposed approach to selection of $\Pi'$ is contained in Sec. \ref{sec:methods}.
Numerical experiments, presented in Sec. \ref{sec:results}, demonstrate that dramatic reductions in integration error (up to 4 orders of magnitude) can be achieved with $\texttt{SMC-KQ}$.
Lastly, a discussion is provided in Sec. \ref{sec:discussion}.

\section{BACKGROUND} \label{sec:background}

This section presents an overview of KQ (Sec. \ref{subsec:overview} and \ref{no SBQ}), empirical (Secs. \ref{subsec:motivation}) and theoretical (Sec. \ref{subsec:estres}) results on the choice of sampling distribution, and discusses kernel learning for KQ (Sec. \ref{sec:kernel sen}).

\subsection{Overview of Kernel Quadrature} \label{subsec:overview}

We now proceed to describe KQ:
Recall the approximation $\hat{f}$ to $f$; an explicit form for the coefficients $\beta_j$ is given as $\bm{\beta} = \bm{\mathrm{K}}^{-1} \bm{\mathrm{f}}$, where $\mathrm{K}_{i,j} = k(\bm{x}_i,\bm{x}_j)$ and $\mathrm{f}_j = f(\bm{x}_j)$.
It is assumed that $\bm{\mathrm{K}}^{-1}$ exists almost surely; for non-degenerate kernels, this corresponds to $\Pi$ having no atoms.
From the above definition of KQ,
\begin{eqnarray*}
\hat{\Pi}(f) = \sum_{j=1}^n \beta_j \int_{\mathcal{X}} k(\bm{x},\bm{x}_j) \Pi(\mathrm{d} \bm{x}) .
\end{eqnarray*}
Defining $z_j = \int_{\mathcal{X}} k(\cdot,\bm{x}_j) \mathrm{d}\Pi$ leads to the estimate in Eqn. \ref{kerquad} with weights $\bm{w} = \bm{\mathrm{K}}^{-1} \bm{z}$.
Pairs $(\Pi,k)$ for which the $z_j$ have closed form are reported in Table 1 of \citet{Briol2016}.
Computation of these weights incurs a computational cost of at most $O(n^3)$ and can be justified when either (i) evaluation of $f$ forms the computational bottleneck, or (ii) the gain in estimator precision (as a function in $n$) dominates this cost (i.e. whenever $s/d > 3 + 1/2$).

Notable contributions on KQ include \citet{Diaconis1988,OHagan1991,Rasmussen2002} who introduced the method and \citet{Huszar2012,Osborne2012active,Osborne2012,Gunter2014,Bach2015,Briol2015,Briol2016,Sarkka2015,Kanagawa2016,Liu2016} who provided consequent methodological extensions. KQ has been applied to a wide range of problems including probabilistic ODE solvers \citep{Kersting2016}, reinforcement learning \citep{Paul2016}, filtering \citep{Pruher2015} and design of experiments \citep{Ma2014}.

Several characterisations of the KQ estimator are known and detailed below.
Let $\mathcal{H}$ denote the Hilbert space characterised by the reproducing kernel $k$, and denote its norm as $\|\cdot\|_{\mathcal{H}}$ \citep{Berlinet2011}.
Then we have the following:
(a) The function $\hat{f}$ is the minimiser of $\|g\|_{\mathcal{H}}$ over $g \in \mathcal{H}$ subject to $g(\bm{x}_j) = f(\bm{x}_j)$ for all $j = 1,\dots,n$.
(b) The function $\hat{f}$ is the posterior mean for $f$ under the Gaussian process prior $f \sim \text{GP}(0,k)$ conditioned on data $\mathbf{f}$ and $\hat{\Pi}(f)$ is the mean of the implied posterior marginal over $\Pi[f]$.
(c) The weights $\bm{w}$ are characterised as the minimiser over $\bm{\gamma} \in \mathbb{R}^n$ of 
$$
e_n(\bm{\gamma};\{\bm{x}_j\}_{j=1}^n) = \sup_{\|f\|_{\mathcal{H}} = 1} \Bigg|\sum_{j=1}^n \gamma_j f(\bm{x}_j) - \Pi(f)\Bigg| ,
$$
the maximal error in the unit ball of $\mathcal{H}$.
These characterisations connect KQ to (a) non-parametric regression, (b) probabilistic integration and (c) quasi-Monte Carlo (QMC) methods \citep{Dick2010}.
The scattered data approximation literature \citep{Sommariva2006} and the numerical analysis literature \citep[where KQ is known as the `empirical interpolation method';][]{Eftang2012,Kristoffersen2013} can also be connected to KQ.
However, our search of all of these literatures did not yield guidance on the optimal selection of the sampling distribution $\Pi'$ (with the exception of \citet{Bach2015} reported in Sec. \ref{subsec:estres}).

\subsection{Over-Reliance on the Kernel} \label{no SBQ}

In \citet{Osborne2012active,Huszar2012,Gunter2014,Briol2015}, the selection of $\bm{x}_n$ was approached as a greedy optimisation problem, wherein the maximal integration error $e_n(\bm{w};\{\bm{x}_j\}_{j=1}^n)$ was minimised, given the location of the previous $\{\bm{x}_j\}_{j=1}^{n-1}$.
This approach has demonstrated considerable success in applications.
However, the error criterion $e_n$ is strongly dependant on the choice of kernel $k$ and the sequential optimisation approach is vulnerable to kernel misspecification.
In particular, if the intrinsic length scale of $k$ is ``too small'' then the $\{\bm{x}_j\}_{j=1}^n$ all cluster around the mode of $\Pi$, leading to poor integral estimation (see Fig. \ref{SBQ} in the Appendix).
Related work on sub-sample selection, such as leverage scores \citep{Bach2013}, can also be non-robust to mis-specified kernels.
The partial solution of online kernel learning requires a sufficient number $n$ of data and is not always practicable in small-$n$ regimes that motivate KQ.

This paper considers sampling methods as a robust alternative to optimisation methods.
Although our method also makes use of $k$ to select $\Pi'$, it reverts to $\Pi' = \Pi$ in the limit as the length scale of $k$ is made small.
In this sense, sampling offers more robustness to kernel mis-specification than optimisation methods, at the expense of a possible (non-asymptotic) decrease in precision in the case of a well-specified kernel.
This line of research is thus complementary to existing work.
However, we emphasise that robustness is an important consideration for general applications of KQ in which kernel specification may be a non-trivial task.

\subsection{Sensitivity to the Sampling Distribution} \label{subsec:motivation}

To date, we are not aware of a clear demonstration of the acute dependence of the performance of the KQ estimator on the choice of distribution $\Pi'$.
It is therefore important to illustrate this phenomenon in order to build intuition.

Consider the toy problem with state space $\mathcal{X} = \mathbb{R}$, target distribution $\Pi = \text{N}(0,1)$, a single test function $f(x) = 1 + \sin(2\pi x)$ and kernel $k(x,x') = \exp(-(x-x')^2)$.
For this problem, consider a range of sampling distributions of the form $\Pi' = \text{N}(0,\sigma^2)$ for $\sigma \in (0,\infty)$.
Fig. \ref{fig:sample sensitive} plots 
$$
\hat{R}_{n,\sigma} = \sqrt{\frac{1}{M} \sum_{m=1}^M (\hat{\Pi}_{n,m,\sigma}(f) - \Pi(f) )^2 },
$$
an empirical estimate for the RMSE where $\hat{\Pi}_{n,m,\sigma}(f)$ is the $m$th of $M$ independent KQ estimates for $\Pi(f)$ based on $n$ samples drawn from the distribution $\Pi'$ with standard deviation $\sigma$ ($M=1000$).
In this case $\Pi(f) = 1$ is available in closed-form.
It is seen that the `obvious' choice of $\sigma = 1$, i.e. $\Pi' = \Pi$, is sub-optimal.
The intuition here is that `extreme' samples $\bm{x}_i$ from the tails of $\Pi$ are rather informative for building the interpolant $\hat{f}$ underlying KQ; we should therefore over-sample these values via a heavier-tailed $\Pi'$.
The same intuition is used for column sampling and to construct leverage scores \citep{Mahoney2011,Drineas2012}.

\begin{figure}[t!]
\includegraphics[width = \columnwidth]{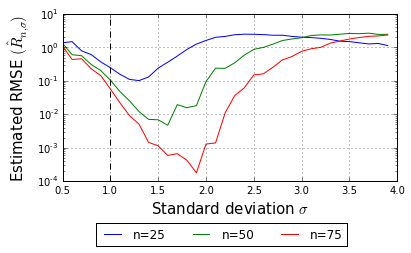}
\caption{The performance of kernel quadrature is sensitive to the choice of sampling distribution.
Here the test function was $f(x) = 1 + \sin(2 \pi x)$, the target measure was $\text{N}(0,1)$, while $n$ samples were generated from $\text{N}(0,\sigma^2)$.
The kernel $k(x,x') = \exp(-(x-x')^2)$ was used.
Notice that the values of $\sigma$ that minimise the root mean square error (RMSE) are uniformly greater than $\sigma=1$ (dashed line) and depend on the number $n$ of samples in general.}
\label{fig:sample sensitive}
\end{figure}

\subsection{Established Results} \label{subsec:estres}

Here we recall the main convergence results to-date on KQ and discuss how these relate to choices of sampling distribution.
To reduce the level of detail below, we make several assumptions at the outset:

{\bf Assumption on the domain:}
The domain $\mathcal{X}$ will either be $\mathbb{R}^d$ itself or a compact subset of $\mathbb{R}^d$ that satisfies an `interior cone condition', meaning that there exists an angle $\theta \in (0,\pi/ 2)$ and a radius $r > 0$ such that for every $\bm{x} \in \mathcal{X}$ there exists $\|\bm{\xi}\|_2 = 1$ such that the cone $\{\bm{x} + \lambda\bm{y} \; : \; \bm{y} \in \mathbb{R}^d, \; \|\bm{y}\|_2 = 1, \; \bm{y}^T\bm{\xi} \geq \cos\theta , \; \lambda \in [0,r]\}$ is contained in $\mathcal{X}$ \citep[see][for background]{Wendland2004}.

{\bf Assumption on the kernel:}
Consider the integral operator $\Sigma: L_2(\Pi) \rightarrow L_2(\Pi)$, with $(\Sigma f)(\bm{x})$ defined as the Bochner integral $\int_{\mathcal{X}} f(\bm{x}') k(\bm{x},\bm{x}') \Pi(\mathrm{d}\bm{x}')$.
Assume that $\int_{\mathcal{X}} k(\bm{x},\bm{x}) \Pi(\mathrm{d}\bm{x}) < \infty$, so that $\Sigma$ is self-adjoint, positive semi-definite and trace-class \citep{Simon1979}.
Then, from an extension of Mercer's theorem \citep{Koenig1986} we have a decomposition $k(\bm{x},\bm{x}') = \sum_{m=1}^\infty \mu_m e_m(\bm{x}) e_m(\bm{x}')$, where $\mu_m$ and $e_m(x)$ are the eigenvalues and eigenfunctions of $\Sigma$.
Further assume that $\mathcal{H}$ is dense in $L_2(\Pi)$.

The first result is adapted and extended from Thm. 1 in \citet{Oates2016}.

\begin{theorem} \label{theo:BMC}
Assume that $\Pi'$ admits a density $\pi'$ defined on a compact domain $\mathcal{X}$.
Assume that $\pi' > c$ for some $c > 0$.
Let $\bm{x}_1,\dots,\bm{x}_m$ be fixed and define the Euclidean fill distance
$$
h_m = \sup_{\bm{x} \in \mathcal{X}} \min_{j = 1,\dots,m} \|\bm{x} - \bm{x}_j \|_2.
$$
Let $\bm{x}_{m+1}, \dots , \bm{x}_n$ be independent draws from $\Pi'$. 
Assume $k$ gives rise to a Sobolev space $\mathbb{H}_s(\Pi)$.
Then there exists $h_0 > 0$ such that, for $h_m < h_0$,
\begin{eqnarray*}
\sqrt{ \mathbb{E} [ \hat{\Pi}(f) - \Pi(f) ]^2 } \leq C(f) n^{-s/d + \epsilon}
\end{eqnarray*}
for all $\epsilon > 0$.
Here $C(f) = c_{k,\Pi',\epsilon} \|f\|_{\mathcal{H}}$ for some constant $0 < c_{k,\Pi',\epsilon} < \infty$ independent of $n$ and $f$.
\end{theorem}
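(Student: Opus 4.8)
The plan is to reduce the stochastic RMSE to a deterministic estimate on the worst-case integration error driven by the fill distance, and then to take expectations over the random point set. By characterisation (c) above, for every realisation of the nodes the absolute error factorises as $|\hat{\Pi}(f) - \Pi(f)| \leq \|f\|_{\mathcal{H}}\, e_n(\bm{w};\{\bm{x}_j\}_{j=1}^n)$. Moreover, since the KQ weights are defined through $\hat{\Pi}(f) = \int_{\mathcal{X}} \hat{f}\,\mathrm{d}\Pi$ and $\hat{f}$ interpolates $f$ at the nodes, the residual $u := f - \hat{f}$ vanishes at every $\bm{x}_j$ and satisfies $\hat{\Pi}(f) - \Pi(f) = -\Pi(u)$. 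It therefore suffices to bound $|\Pi(u)|$ uniformly over the unit ball of $\mathcal{H}$ and then to control the resulting random quantity in mean square.

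First I would derive a deterministic bound of the form $e_n \leq C\, h_n^{s}$, where $h_n \leq h_m$ is the fill distance of the full node set. Because $\Pi$ is a probability measure, Jensen's inequality gives $|\Pi(u)| \leq \|u\|_{L_2(\Pi)}$, and I would then invoke the $L_2$ sampling inequality from scattered-data approximation: for $u \in \mathbb{H}_s(\Pi)$ vanishing on a node set of Euclidean fill distance $h_n < h_0$, one has $\|u\|_{L_2(\Pi)} \leq C\, h_n^{s}\, |u|_{\mathbb{H}_s(\Pi)}$ (this step uses the interior cone condition on $\mathcal{X}$ and that $\Pi$ is comparable to Lebesgue measure, so that the Euclidean fill distance controls the approximation). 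Combining with $|u|_{\mathbb{H}_s(\Pi)} = |f - \hat{f}|_{\mathbb{H}_s(\Pi)} \leq C'\|f\|_{\mathcal{H}}$, which follows from the minimal-norm property of the interpolant together with the equivalence of $\|\cdot\|_{\mathcal{H}}$ and the Sobolev norm, yields $e_n \leq C\, h_n^{s}$. The essential point is that integrating against a probability measure permits use of the $L_2$, rather than the $L_\infty$, sampling inequality, improving the exponent from $h^{s-d/2}$ to $h^{s}$; this is precisely what is needed to reach the rate $n^{-s/d}$.

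Next I would pass to the expectation. The fixed nodes $\bm{x}_1,\dots,\bm{x}_m$ play a purely technical role: since adding points can only decrease the fill distance, $h_n \leq h_m < h_0$ holds \emph{surely}, so the sampling inequality applies on every realisation and no exceptional event in which the bound fails need be treated separately. It then remains to bound $\mathbb{E}[h_n^{2s}]$. As $h_n$ is no larger than the fill distance of the $n-m$ independent draws from $\Pi'$, and $\pi' > c$ on the compact domain, a standard covering-and-occupancy argument (partition $\mathcal{X}$ into $O(t^{-d})$ cells and require each to be hit) gives tail estimates that yield $\mathbb{E}[h_n^{2s}] \leq C\,(\log n / n)^{2s/d}$. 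Assembling the pieces, $\sqrt{\mathbb{E}[\hat{\Pi}(f)-\Pi(f)]^2} \leq \|f\|_{\mathcal{H}}\sqrt{\mathbb{E}[e_n^2]} \leq c_{k,\Pi',\epsilon}\,\|f\|_{\mathcal{H}}\,n^{-s/d+\epsilon}$, where the logarithmic factor is absorbed into the arbitrarily small $\epsilon$, giving $C(f) = c_{k,\Pi',\epsilon}\|f\|_{\mathcal{H}}$ as claimed.

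The main obstacle I anticipate lies in the deterministic half: establishing the $L_2$ sampling inequality with the sharp exponent $h^{s}$ under only the interior cone condition (which forces $s > d/2$, both for the Sobolev embedding and for well-posed interpolation), and verifying that its constant depends on $k$, $\mathcal{X}$ and $s$ alone, and not on the node configuration beyond its fill distance. The stochastic half is comparatively routine; its only delicate points are the conversion of the high-probability fill-distance estimate into a moment bound and the clean absorption of the $\log n$ factor into $n^{\epsilon}$, both of which hinge on the density lower bound $\pi' > c$.
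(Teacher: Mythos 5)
Your proposal is correct and shares the paper's two-stage skeleton --- a deterministic bound of the form $|\hat{\Pi}(f) - \Pi(f)| \leq C\, h_n^s \|f\|_{\mathcal{H}}$ followed by a moment bound on the fill distance --- but each stage is executed with different ingredients. For the deterministic stage, the paper never passes through $L_2(\Pi)$: it directly invokes Thm.~11.13 of Wendland (2004), a \emph{uniform} native-space error estimate $|\hat{f}(\bm{x}) - f(\bm{x})| \leq c_k h_n^s \|f\|_{\mathcal{H}}$ valid for $h_n < h_0$, and then bounds $|\hat{\Pi}(f)-\Pi(f)| \leq \sup_{\bm{x}} |\hat{f}(\bm{x})-f(\bm{x})|$. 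You instead use Jensen plus the $L_2$ zeros/sampling inequality for functions vanishing at the nodes, together with the minimal-norm (Pythagorean) property $\|f-\hat{f}\|_{\mathcal{H}} \leq \|f\|_{\mathcal{H}}$ and norm equivalence. Your route is the one that makes the exponent $s$ (rather than the $s-d/2$ of the usual sup-norm estimate for $\mathbb{H}_s$ native spaces) transparent, at the price of needing $L_2(\Pi)$ to be comparable to $L_2(\mathrm{Leb})$, a hypothesis the paper's $L_\infty$ shortcut does not require. For the stochastic stage, the paper simply cites Lem.~1 of Oates et al.\ (2016), which states exactly $\mathbb{E}[h_n^{2s}] \leq c_{\Pi',\epsilon}\, n^{-2s/d+\epsilon}$ under $\pi' > c$ and the cone condition, whereas you re-derive that bound via the covering/occupancy argument and absorb the $\log n$ factor into $n^{\epsilon}$ --- which is indeed how such lemmas are proved. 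Your observation that the fixed points $\bm{x}_1,\dots,\bm{x}_m$ guarantee $h_n \leq h_m < h_0$ surely, so the sampling inequality never fails on an exceptional event, is precisely the role those points play in the paper's proof as well. Both routes deliver the same theorem: yours is more self-contained and pinpoints where each hypothesis enters, while the paper's is shorter because it outsources both halves to the literature.
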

All proofs are reserved for the Appendix. 
The main contribution of Thm. \ref{theo:BMC} is to establish a convergence rate for KQ when using importance sampling distributions.
A similar result appeared in Thm. 1 of \citet{Briol2016} for samples from $\Pi$ (see the Appendix) and was extended to MCMC samples in \citet{Oates2016}.
An extension to the case of a mis-specified kernel was considered in \citet{Kanagawa2016}.
However a limitation of this direction of research is that it does not address the question of how to select $\Pi'$.

\begin{figure}[t!]
\includegraphics[width = \columnwidth]{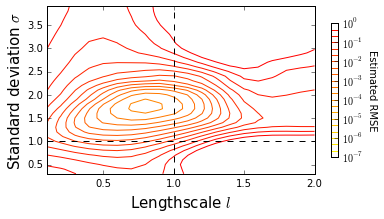}
\caption{The performance of kernel quadrature is sensitive to the choice of kernel.
Here the same set-up as Fig. \ref{fig:sample sensitive} was used with $n = 75$.
The kernel $k(x,x') = \exp(-(x-x')^2/\ell^2)$ was used for various choices of parameter $\ell \in (0,\infty)$.
The root mean square error (RMSE) is sensitive to choice of $\ell$ for all choices of $\sigma$, suggesting that online kernel learning could be used to improve over the default choice of $\ell = 1$ and $\sigma=1$ (dashed lines).}
\label{fig:kernel sensitive}
\end{figure}

The second result that we present is a consequence of the recent work of \citet{Bach2015}, who considered a particular choice of $\Pi' = \Pi_{\text{B}}$, depending on a fixed $\lambda > 0$, via the density 
$
\pi_{\text{B}}(\bm{x}; \lambda) \propto \sum_{m=1}^\infty \frac{\mu_m}{\mu_m + \lambda} e_m^2(\bm{x}).
$
The following is adapted from Prop. 1 in \citet{Bach2015}:
\begin{theorem} \label{bach result}
Let $\bm{x}_1,\dots,\bm{x}_n \sim \Pi_{\text{B}}$ be independent and $\lambda > 0$.
For $\delta \in (0,1)$ and 
$
n \geq 5 d(\lambda) \log \frac{16 d(\lambda) }{ \delta}$, $d(\lambda) = \sum_{m=1}^\infty \frac{\mu_m}{\mu_m + \lambda},
$
we have that
\begin{equation*}
|\hat{\Pi}(f) - \Pi(f)| \leq 2 \lambda^{1/2} \|f\|_{\mathcal{H}} ,
\end{equation*}
with probability greater than $1-\delta$.
\end{theorem}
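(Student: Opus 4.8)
The plan is to pass from the stated pointwise bound to a worst-case bound over the unit ball of $\mathcal{H}$, and then estimate the latter. By homogeneity it suffices to treat $\|f\|_{\mathcal{H}} = 1$. Writing $\mu_\Pi = \int_{\mathcal{X}} k(\cdot,\bm{x})\,\Pi(\mathrm{d}\bm{x})$ for the mean element, the reproducing property gives $\Pi(f) = \langle f, \mu_\Pi\rangle_{\mathcal{H}}$ and $f(\bm{x}_j) = \langle f, k(\cdot,\bm{x}_j)\rangle_{\mathcal{H}}$, so that for any weights $\bm{\gamma}$ one has $\sum_j \gamma_j f(\bm{x}_j) - \Pi(f) = \langle f, \sum_j \gamma_j k(\cdot,\bm{x}_j) - \mu_\Pi\rangle_{\mathcal{H}}$. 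By characterisation (c) and Cauchy--Schwarz the KQ error is therefore $|\hat{\Pi}(f) - \Pi(f)| \le \|\mu_\Pi - \hat{\mu}\|_{\mathcal{H}}$, where $\hat{\mu} = \sum_j w_j k(\cdot,\bm{x}_j)$ minimises this RKHS distance over the span of the sampled features. Since $\hat\mu$ is optimal, it suffices to exhibit \emph{one} element of this span whose distance to $\mu_\Pi$ is at most $2\lambda^{1/2}$.

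First I would introduce the covariance operator $\Sigma = \int k(\cdot,\bm{x})\otimes k(\cdot,\bm{x})\,\Pi(\mathrm{d}\bm{x})$ on $\mathcal{H}$, whose eigenvalues are the Mercer values $\mu_m$, together with its importance-weighted empirical analogue $\hat{\Sigma} = \tfrac{1}{n}\sum_j q(\bm{x}_j)^{-1} k(\cdot,\bm{x}_j)\otimes k(\cdot,\bm{x}_j)$, where $q = \mathrm{d}\Pi_{\mathrm{B}}/\mathrm{d}\Pi$. The crucial observation is that the Bach density is proportional to the ridge leverage function, $q(\bm{x}) = \ell_\lambda(\bm{x})/d(\lambda)$ with $\ell_\lambda(\bm{x}) = \langle k(\cdot,\bm{x}),(\Sigma+\lambda I)^{-1}k(\cdot,\bm{x})\rangle_{\mathcal{H}} = \sum_m \mu_m e_m^2(\bm{x})/(\mu_m+\lambda)$, and that $\mathbb{E}_{\Pi_{\mathrm{B}}}[\hat\Sigma] = \Sigma$. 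As the explicit near-optimal candidate I would take the ridge (Tikhonov) approximation $\hat{\mu}_\lambda = \hat{\Sigma}(\hat{\Sigma}+\lambda I)^{-1}\mu_\Pi$, which lies in the span of $\{k(\cdot,\bm{x}_j)\}$ and hence upper-bounds the optimal error; its residual is $\mu_\Pi - \hat\mu_\lambda = \lambda(\hat{\Sigma}+\lambda I)^{-1}\mu_\Pi$.

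The estimate then splits into a deterministic and a probabilistic part. For the deterministic part, suppose the normalised empirical operator is close to the truth, $\|(\Sigma+\lambda I)^{-1/2}(\hat\Sigma - \Sigma)(\Sigma+\lambda I)^{-1/2}\|_{\mathrm{op}} \le 1/2$; this gives $\hat\Sigma + \lambda I \succeq \tfrac12(\Sigma+\lambda I)$, whence $\|\lambda(\hat\Sigma+\lambda I)^{-1}\mu_\Pi\|_{\mathcal{H}} \le 2\lambda^{1/2}\,\|(\Sigma+\lambda I)^{-1/2}\mu_\Pi\|_{\mathcal{H}}$ after factoring out $(\Sigma+\lambda I)^{\pm1/2}$ and bounding $\|(\Sigma+\lambda I)^{-1/2}\|_{\mathrm{op}}\le\lambda^{-1/2}$. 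The last factor is at most one: expanding $\mu_\Pi$ in the Mercer basis gives $\|(\Sigma+\lambda I)^{-1/2}\mu_\Pi\|_{\mathcal{H}}^2 = \sum_m \tfrac{\mu_m}{\mu_m+\lambda}c_m^2$ with $c_m = \int e_m\,\mathrm{d}\Pi$, and Parseval in $L_2(\Pi)$ yields $\sum_m c_m^2 = \|1\|_{L_2(\Pi)}^2 = 1$, so the sum is bounded by $\sum_m c_m^2 = 1$. This produces exactly the constant $2\lambda^{1/2}$.

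The main obstacle is the probabilistic part: establishing the operator inequality $\|(\Sigma+\lambda I)^{-1/2}(\hat\Sigma-\Sigma)(\Sigma+\lambda I)^{-1/2}\|_{\mathrm{op}}\le 1/2$ with probability at least $1-\delta$ whenever $n \ge 5 d(\lambda)\log(16 d(\lambda)/\delta)$. Here the role of the sampling distribution $\Pi_{\mathrm{B}}$ becomes decisive: because $q$ equalises the leverage scores, each i.i.d. rank-one summand $\tfrac{1}{nq(\bm{x}_j)}(\Sigma+\lambda I)^{-1/2}k(\cdot,\bm{x}_j)\otimes k(\cdot,\bm{x}_j)(\Sigma+\lambda I)^{-1/2}$ has operator norm \emph{exactly} $\ell_\lambda(\bm{x}_j)/(nq(\bm{x}_j)) = d(\lambda)/n$ almost surely, while the sum has mean $(\Sigma+\lambda I)^{-1/2}\Sigma(\Sigma+\lambda I)^{-1/2}$ of trace $d(\lambda)$ and operator norm at most one. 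A matrix Chernoff/Bernstein inequality for sums of bounded self-adjoint operators, in its intrinsic-dimension form so as to remain valid in the infinite-dimensional RKHS, then delivers the deviation bound; tracking its constants for deviation level $1/2$ and confidence $1-\delta$ reproduces the stated sample size. Verifying that this concentration survives the passage to infinite dimensions, and that the uniform leverage bound $d(\lambda)/n$ is precisely what makes the effective dimension $d(\lambda)$, rather than the ambient rank, govern the sample complexity, is the technical heart of the argument.
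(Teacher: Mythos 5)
Your architecture is sound, but it is not the paper's route. The paper's proof is a pure reduction: it passes from the RKHS distance between the KQ embedding and the mean element to an $L_2(\Pi)$ distance via the isometry $\Sigma^{1/2}: L_2(\Pi) \to \mathcal{H}$, identifies the target $\int_{\mathcal{X}} \psi(\cdot,\bm{x})\,\Pi(\mathrm{d}\bm{x})$ as a unit-norm element of $\mathcal{H}$, and then invokes Bach's Prop.~1 verbatim, only having to drop the constraint $\|\bm{\beta}\|_2^2 \le 4/n$ and use optimality of the KQ weights; no concentration argument is ever carried out. You instead re-prove the substance of Bach's proposition: the ridge candidate $\hat\Sigma(\hat\Sigma+\lambda I)^{-1}\mu_\Pi$ lying in the span, the split into a deterministic bound conditional on an operator-concentration event, and matrix concentration driven by the leverage-score identity $\ell_\lambda(\bm{x}_j)/(n q(\bm{x}_j)) = d(\lambda)/n$. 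Your route is self-contained and makes transparent \emph{why} $\Pi_{\text{B}}$ is the right sampling law (it equalises the summand norms so that $d(\lambda)$, not the ambient rank, governs the sample complexity), which the paper's citation-based proof leaves hidden; your deterministic chain and the Parseval bound $\|(\Sigma+\lambda I)^{-1/2}\mu_\Pi\|_{\mathcal{H}} \le 1$ are both correct.

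The genuine gap is in the one step you left unexecuted, and it is not mere bookkeeping: with your calibration the stated constants do not come out. At deviation level $1/2$, Bernstein for operator sums with summand bound $L \approx d(\lambda)/n$ and variance proxy $\approx d(\lambda)/n$ gives a tail of order $\exp\bigl(-\tfrac{3n}{28\, d(\lambda)}\bigr)$, so you would need roughly $n \gtrsim 9.3\, d(\lambda)\log(\,\cdot\,)$ — strictly more than the theorem's $n \ge 5\, d(\lambda)\log(16\, d(\lambda)/\delta)$. The repair is to run the event at deviation $3/4$ instead: concentration is then cheaper (the exponent becomes $\approx \tfrac{9n}{40\, d(\lambda)}$, matching the constant $5$), while the deterministic part degrades to $\hat\Sigma + \lambda I \succeq \tfrac{1}{4}(\Sigma + \lambda I)$, whence $\lambda\|(\hat\Sigma+\lambda I)^{-1}\mu_\Pi\|_{\mathcal{H}} \le \lambda \cdot \lambda^{-1/2}\cdot 2 \cdot 1 = 2\lambda^{1/2}$ — exactly the claimed bound, with no slack to spare; this is precisely the calibration underlying Bach's constants. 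You should also name the concentration inequality you rely on (an intrinsic-dimension or Hilbert-space Bernstein inequality, e.g. Minsker's), since finite-matrix versions do not apply directly in $\mathcal{H}$ and this is where the factor $16\, d(\lambda)/\delta$ inside the logarithm originates.
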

Some remarks are in order:
(i) \citet[][Prop. 3]{Bach2015} showed that, for $\Pi_\text{B}$, integration error scales at an optimal rate in $n$ up to logarithmic terms and, after $n$ samples, is of size $\sqrt{\mu_n}$. 
(ii) The distribution $\Pi_{\text{B}}$ is obtained from minimising an upper bound on the integration error, rather than the error itself.
It is unclear to us how well $\Pi_{\text{B}}$ approximates an optimal sampling distribution for KQ.
(iii) In general $\Pi_{\text{B}}$ is hard to compute.
For the specific case $\mathcal{X} = [0,1]^d$, $\mathcal{H}$ equal to $\mathbb{H}_s(\Pi)$ and $\Pi$ uniform, the distribution $\Pi_{\text{B}}$ is also uniform (and hence independent of $n$; see Sec. 4.4 of \citet{Bach2015}).
However, even for the simple example of Sec. \ref{subsec:motivation}, $\Pi_{\text{B}}$ does not appear to have a closed form (details in Appendix).
An approximation scheme was proposed in Sec. 4.2 of \citet{Bach2015} but the error of this scheme was not studied.

Optimal sampling for approximation in $\|\cdot\|_{L_2(\Pi)}$ with weighted least squares (not in the kernel setting) was considered in \citet{Hampton2015,Cohen2016}.

\subsection{Goals} \label{sec:kernel sen}

Our first goal was to formalise the sampling problem for KQ; this is now completed.
Our second goal was to develop a novel automatic approach to selection of $\Pi'$, called $\texttt{SMC-KQ}$; full details are provided in Sec.  \ref{sec:methods}. 

Also, observe that the integrand $f$ will in general belong to an infinitude of Hilbert spaces, while for KQ a single kernel $k$ must be selected.
This choice will affect the performance of the KQ estimator; for example, in Fig. \ref{fig:kernel sensitive}, the problem of Sec. \ref{subsec:motivation} was reconsidered based on a class of kernels $k(x,x') = \exp(-(x-x')^2/\ell^2)$ parametrised by $\ell \in (0,\infty)$.
Results showed that, for all choices of $\sigma$ parameter, the RMSE of KQ is sensitive to choice of $\ell$.
In particular, the default choice of $\ell = 1$ is not optimal.
For this reason, an extension that includes kernel learning, called $\texttt{SMC-KQ-KL}$, is proposed in Sec. \ref{sec:methods}.

\section{METHODS} \label{sec:methods}

In this section the $\texttt{SMC-KQ}$ and $\texttt{SMC-KQ-KL}$ methods are presented.
Our aim is to explain in detail the main components (\texttt{SMC}, \texttt{temp}, \texttt{crit}) of Alg. \ref{SMCKQ}.
To this end, Secs. \ref{subsec:thermo} and \ref{subsec:SMC} set up our SMC sampler to target tempered distributions, while Sec. \ref{subsec:optimal t} presents a heuristic for the choice of temperature schedule.
Sec. \ref{sec:KL} extends the approach to kernel learning and Sec. \ref{subsec:stopping} proposes a novel criterion to determine when a desired error tolerance is reached.

\subsection{Thermodynamic Ansatz} \label{subsec:thermo}

To begin, consider $f$, $k$ and $n$ as fixed.
The following ansatz is central to our proposed $\texttt{SMC-KQ}$ method: 
An optimal distribution $\Pi^*$ (in the sense of Eqn. \ref{objective function}) can be well-approximated by a distribution of the form
\begin{eqnarray}
\Pi_t = \Pi_0^{1-t} \Pi^t , \quad t \in [0,1] \label{ansatz}
\end{eqnarray}
for a specific (but unknown) `inverse temperature' parameter $t = t^*$.
Here $\Pi_0$ is a reference distribution to be specified and which should be chosen to be un-informative in practice.
It is assumed that all $\Pi_t$ exist (i.e. can be normalised).
The motivation for this ansatz stems from Sec. \ref{subsec:motivation}, where $\Pi = \text{N}(0,1)$ and $\Pi_t = \text{N}(0,\sigma^2)$ can be cast in this form with $t = \sigma^{-1}$ and $\Pi_0$ an (improper) uniform distribution on $\mathbb{R}$.
In general, tempering generates a class of distributions which over-represent extreme events relative to $\Pi$ (i.e. have heavier tails).
This property has the potential to improve performance for KQ, as demonstrated in Sec. \ref{subsec:motivation}.

The ansatz of Eqn. \ref{ansatz} reduces the non-parametric sampling problem for KQ to the one-dimensional parametric problem of selecting a suitable $t \in [0,1]$.
The problem can be further simplified by focusing on a discrete temperature ladder $\{t_i\}_{i=0}^T$ such that $t_0 = 0$, $t_i < t_{i+1}$ and $t_T = 1$.
Discussion of the choice of ladder is deferred to Sec. \ref{subsec:optimal t}.
This reduced problem, where we seek an optimal index $i^* \in \{0,\dots,T\}$, is still non-trivial as no closed-form expression is available for the RMSE at each candidate $t_i$.
To overcome this {\it impasse} a novel approach to estimate the RMSE is presented in Sec. \ref{subsec:stopping}.

\subsection{Convex Ansatz (\texttt{SMC})}  \label{subsec:SMC}

The proposed $\texttt{SMC-KQ}$ algorithm requires a second ansatz, namely that the RMSE is convex in $t$ and possesses a global minimum in the range $t \in (0,1)$. 
This second ansatz (borne out in numerical results in Fig. \ref{fig:sample sensitive}) motivates an algorithm that begins at $t_0=0$ and tracks the RMSE until an increase is detected, say at $t_i$; at which point the index $i^* = i - 1$ is taken for KQ.

To realise such an algorithm, this paper exploited SMC methods \citep{Chopin2002,Moral2006}. 
Here, a particle approximation $\{(w_j,\bm{x}_j)\}_{j=1}^N$ to $\Pi_{t_0}$ is first obtained where $\bm{x}_j$ are independent draws from $\Pi_0$, $w_j = N^{-1}$ and $N \gg n$.
Then, at iteration $i$, the particle approximation to $\Pi_{t_{i-1}}$ is re-weighted, re-sampled and subject to a Markov transition, to deliver a particle approximation $\{(w_j' , \bm{x}_j')\}_{j=1}^N$ to $\Pi_{t_i}$.
This `re-sample-move' algorithm, denoted $\texttt{SMC}$, is standard but, for completeness, pseudo-code is provided as Alg. \ref{SMC} in the Appendix.

At iteration $i$, a subset of size $n$ is drawn from the unique\footnote{This ensures that kernel matrices have full rank. It does \emph{not} introduce bias into KQ, since in general $\Pi'$ need not equal $\Pi$. However, to keep notation clear, we do not make this operation explicit.} elements in $\{\bm{x}_j'\}_{j=1}^N$, from the particle approximation to $\Pi_{t_i}$, and proposed for use in KQ.
A criterion $\texttt{crit}$, defined in Sec. \ref{subsec:stopping}, is used to determine whether the resultant KQ error has increased relative to $\Pi_{t_{i-1}}$.
If this is the case, then the distribution $\Pi_{t_{i-1}}$ from the previous iteration is taken for use in KQ.
Otherwise the algorithm proceeds to $t_{i+1}$ and the process repeats.
In the degenerate case where the RMSE has a minimum at $t_T$, the algorithm defaults to standard KQ with $\Pi' = \Pi$.

Both ansatz of the $\texttt{SMC-KQ}$ algorithm are justified through the strong empirical results presented in Sec. \ref{sec:results}.

\subsection{Choice of Temperature Schedule (\texttt{temp})} \label{subsec:optimal t}

The choice of temperature schedule $\{t_i\}_{i=0}^T$ influences several aspects of $\texttt{SMC-KQ}$:
(i) The SMC approximation to $\Pi_{t_i}$ is governed by the ``distance" (in some appropriate metric) between $\Pi_{t_{i-1}}$ and $\Pi_{t_i}$.
(ii) The speed at which the minimum $t^*$ can be reached is linear in the number of temperatures between $0$ and $t^*$.
(iii) The precision of KQ depends on the approximation $t^* \approx t_{i^*}$.
Factors (i,iii) motivate the use of a fine schedule with $T$ large, while (ii) motivates a coarse schedule with $T$ small.

For this work, a temperature schedule was used that is well suited to both (i) and (ii), while a strict constraint $t_i - t_{i-1} \leq \Delta$ was imposed on the grid spacing to acknowledge (iii).
The specific schedule used in this work was determined based on the conditional effective sample size of the current particle population, as proposed in the recent work of \citet{Zhou2016}.
Full details are presented in Algs. \ref{cess} and \ref{temp} in the Appendix.

\subsection{Kernel Learning} \label{sec:KL}

In Sec. \ref{sec:kernel sen} we demonstrated the benefit of kernel learning for KQ.
From the Gaussian process characterisation of KQ from Sec. \ref{subsec:overview}, it follows that kernel parameters $\theta$ can be estimated, conditional on a vector of function evaluations $\bm{\mathrm{f}}$, via maximum marginal likelihood:
\begin{eqnarray*}
\theta' & \gets & \argmax_\theta p(\bm{\mathrm{f}} | \theta)\; = \; \argmin_\theta \bm{\mathrm{f}}^\top \bm{\mathrm{K}}_\theta^{-1} \bm{\mathrm{f}} + \log |\bm{\mathrm{K}}_\theta|.
\end{eqnarray*}
In $\texttt{SMC-KQ-KL}$, the function evaluations $\bm{\mathrm{f}}$ are obtained at the first\footnote{This is a notational convention and is without loss of generality. In this paper these states were a random sample (without replacement) of size $n$, though stratified sampling among the $N$ states could be used. More sophisticated alternatives that also involve the kernel $k$, such as leverage scores, were {\bf not} considered, since in general these (i) introduce a vulnerability to mis-specified kernels and (ii) require manipulation of a $N \times N$ kernel matrix \citep{Patel2015}.} $n$ (of $N$) states $\{\bm{x}_j\}_{j=1}^n$ and the parameters $\theta$ are updated in each iteration of the SMC.
This demands repeated function evaluation; this burden can be reduced with less frequent parameter updates and caching of all previous function evaluations.
The experiments in Sec. \ref{sec:results} assessed both $\texttt{SMC-KQ}$ and $\texttt{SMC-KQ-KL}$ in terms of precision per \emph{total} number of function evaluations, so that the additional cost of kernel learning was taken into account.


\begin{algorithm}[t!]
\caption{SMC Algorithm for KQ}
\label{SMCKQ}
\begin{algorithmic}
\STATE {\bf function} $\texttt{SMC-KQ}(f, \Pi, k, \Pi_0, \rho , n, N)$
\STATE {\bf input} $f$ (integrand)
\STATE {\bf input} $\Pi$ (target disn.) 
\STATE {\bf input} $k$ (kernel)
\STATE {\bf input} $\Pi_0$ (reference disn.)
\STATE {\bf input} $\rho$ (re-sample threshold) 
\STATE {\bf input} $n$ (num. func. evaluations)
\STATE {\bf input} $N$ (num. particles)
\STATE $i \gets 0$; $t_i \gets 0$; $R_{\min} \gets \infty$
\STATE $\bm{x}_j' \sim \Pi_0$ (initialise states $\forall j \in 1:N$)
\STATE $w_j' \gets N^{-1}$ (initialise weights $\forall j \in 1:N$)
\STATE $R \gets \texttt{crit} (\Pi , k , \{\bm{x}_j'\}_{j=1}^N)$ (est'd error)
\WHILE{$\texttt{test}(R < R_{\min})$ and $t_i < 1$}
\STATE $i \gets i + 1$; $R_{\min} \gets R$
\STATE $\{(w_j , \bm{x}_j)\}_{j=1}^N \gets \{(w_j' , \bm{x}_j')\}_{j=1}^N$
\STATE $t_i \gets \texttt{temp}(\{(w_j , \bm{x}_j)\}_{j=1}^N , t_{i-1},\rho)$ (next temp.)
\STATE $\{(w_j' , \bm{x}_j')\}_{j=1}^N \gets \texttt{SMC}(\{(w_j,\bm{x}_j)\}_{j=1}^N , t_i , t_{i-1} , \rho)$ (next particle approx.)
\STATE $R \gets \texttt{crit} (\Pi , k , \{\bm{x}_j'\}_{j=1}^N)$ (est'd error)
\ENDWHILE
\STATE $\mathrm{f}_j \gets f(\bm{x}_j)$ (function eval. $\forall j \in 1:n$)
\STATE $z_j \gets \int_{\mathcal{X}} k(\cdot , \bm{x}_j) \mathrm{d}\Pi$ (kernel mean eval. $\forall j \in 1:n$)
\STATE $\mathrm{K}_{j,j'} \gets k(\bm{x}_j , \bm{x}_{j'})$ (kernel eval. $\forall j,j' \in 1:n$)
\STATE $\hat{\Pi}(f) \gets \bm{z}^\top \bm{\mathrm{K}}^{-1} \bm{\mathrm{f}}$ (eval. KQ estimator) 
\STATE {\bf return} $\hat{\Pi}(f)$
\end{algorithmic} 
\end{algorithm}


\subsection{Termination Criterion (\texttt{crit})} \label{subsec:stopping}

The $\texttt{SMC-KQ-KL}$ algorithm is designed to track the RMSE as $t$ is increased.
However, the RMSE is not available in closed form.
In this section we derive a tight upper bound on the RMSE that is used for the $\texttt{crit}$ component in Alg. \ref{SMCKQ}.

From the worst-case characterisation of KQ presented in Sec. \ref{subsec:overview}, we have an upper bound
\begin{eqnarray}
|\hat{\Pi}(f) - \Pi(f)| \leq e_n(\bm{w}; \{\bm{x}_j\}_{j=1}^n) \|f\|_{\mathcal{H}}. \label{CS eqn}
\end{eqnarray}
The term $e_n(\bm{w}; \{\bm{x}_j\}_{j=1}^n)$, denoted henceforth as $e_n(\{\bm{x}_j\}_{j=1}^n)$ (since $\bm{w}$ depends on $\{\bm{x}_j\}_{j=1}^n$), can be computed in closed form (see the Appendix).
This motivates the following upper bound on MSE:
\begin{eqnarray}
\mathbb{E} [\hat{\Pi}(f) - \Pi(f) ]^2 \leq \underbrace{\mathbb{E}[e_n(\{\bm{x}_j\}_{j=1}^n)^2]}_{(*)} \underbrace{\|f\|_{\mathcal{H}}^2}_{(**)}
\label{eq:upper bound}
\end{eqnarray}
The term $(*)$ can be estimated with the bootstrap approximation
\begin{eqnarray*}
\mathbb{E}[e_n(\{\bm{x}_j\}_{j=1}^n)^2] = \sum_{m=1}^M \frac{e_n(\{\tilde{\bm{x}}_{m,j}\}_{j=1}^n)^2}{M} =: R^2
\end{eqnarray*}
where $\tilde{\bm{x}}_{m,j}$ are independent draws from $\{\bm{x}_j\}_{j=1}^N$.
In $\texttt{SMC-KQ}$ the term $(**)$ is an unknown constant and the statistic $R$, an empirical proxy for the RMSE, is monitored at each iteration.
The algorithm terminates once an increase in this statistic occurs.
For $\texttt{SMC-KQ-KL}$ the term $(**)$ is non-constant as it depends on the kernel hyper-parameters; then $(**)$ can in addition be estimated as $\|\hat{f}\|_{\mathcal{H}}^2 = \bm{w}^\top \bm{\mathrm{K}}_\theta \bm{w}$ and we monitor the product of $R$ and $\|\hat{f}\|_{\mathcal{H}}$, with termination when an increase is observed (c.f. \texttt{test}, defined in the Appendix).

Full pseudo-code for $\texttt{SMC-KQ}$ is provided as Alg. \ref{SMCKQ}, while $\texttt{SMC-KQ-KL}$ is Alg. \ref{SMCKQKL} in the Appendix.
To summarise, we have developed a novel procedure, $\texttt{SMC-KQ}$ (and an extension $\texttt{SMC-KQ-KL}$), designed to approximate the optimal KQ estimator based on the unavailable optimal distribution in Eqn. \ref{objective function} where $\mathcal{F}$ is the unit ball of $\mathcal{H}$.
Earlier empirical results in Sec. \ref{subsec:motivation} suggest that $\texttt{SMC-KQ}$ has potential to provide a powerful and general algorithm for numerical integration. The additional computational cost of optimising the sampling distribution does however have to be counterbalanced with the potential gain in error, and so this method will mainly be of practical interest for problems with expensive integrands or complex target distributions.
The following section reports experiments designed to test this claim.



\section{RESULTS} \label{sec:results}

Here we compared $\texttt{SMC-KQ}$ (and $\texttt{SMC-KQ-KL}$) against the corresponding default approaches $\texttt{KQ}$ (and $\texttt{KQ-KL}$) that are based on $\Pi' = \Pi$.
Sec. \ref{sec:application_toy_problem} below reports an assessment in which the true value of integrals is known by design, while in Sec. \ref{Stein sec} the methods were deployed to solve a parameter estimation problem involving differential equations.

\subsection{Simulation Study} \label{sec:application_toy_problem}

To continue our illustration from Sec. \ref{sec:background}, we investigated the performance of $\texttt{SMC-KQ}$ and $\texttt{SMC-KQ-KL}$ for integration of $f(x) = 1 + \sin(2 \pi x)$ against the distribution $\Pi = \mathrm{N}(0,1)$.
Here the reference distribution was taken to be $\Pi_0 = \mathrm{N}(0,8^2)$. 
All experiments employed SMC with $N = 300$ particles, random walk Metropolis transitions (Alg. \ref{markov}), the re-sample threshold $\rho = 0.95$ and a maximum grid size $\Delta = 0.1$. 
Dependence of the subsequent results on the choice of $\Pi_0$ was investigated in Fig. \ref{fig:comparison_startingdist} in the Appendix.

Fig. \ref{fig:res1} (top) reports results for $\texttt{SMC-KQ}$ against $\texttt{KQ}$, for fixed length-scale $\ell = 1$.
Corresponding results for $\texttt{SMC-KQ-KL}$ against $\texttt{KQ-KL}$ are shown in the bottom plot.
It was observed that $\texttt{SMC-KQ}$ (resp. $\texttt{SMC-KQ-KL}$) out-performed $\texttt{KQ}$ (resp. $\texttt{KQ-KL}$) in the sense that, on a per-function-evaluation basis, the MSE achieved by the proposed method was lower than for the standard method.
The largest reduction in MSE achieved was about 8 orders of magnitude (correspondingly 4 orders of magnitude in RMSE).
A fair approximation to the $\sigma = 2$ method, which is approximately optimal for $n = 75$ (c.f. results in Fig. \ref{fig:sample sensitive}), was observed. 
The termination criterion in Sec. \ref{subsec:stopping} was observed to be a good approximation to the optimal temperature $t^*$ (Fig. \ref{fig:res1_hist} in Appendix).
As an aside, we note that the MSE was gated at $10^{-16}$ for all methods due to numerical condition of the kernel matrix $\mathbf{K}$ (a known feature of the Gaussian kernel used in this experiment).

The investigation was extended to larger dimensions ($d=3$ and $d=10$) and more complex integrands $f$ in the Appendix. In all cases, considerable improvements were obtained using $\texttt{SMC-KQ}$ over $\texttt{KQ}$.

\begin{figure}[t!]
\includegraphics[width = \columnwidth]{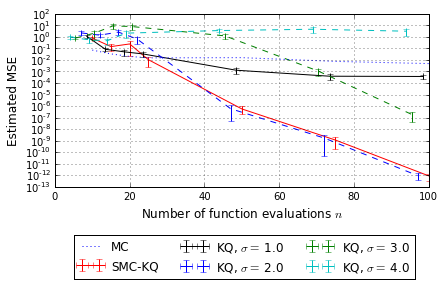}
\begin{center}
\includegraphics[width = \columnwidth]{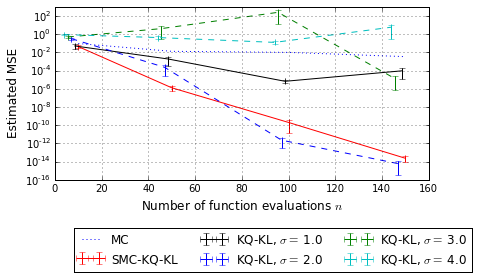}
\end{center}
\caption{Performance on for the running illustration of Figs. \ref{fig:sample sensitive} and \ref{fig:kernel sensitive}. The top plot shows $\texttt{SMC-KQ}$ against $\texttt{KQ}$, whilst the bottom plot illustrates the versions with kernel learning.}
\label{fig:res1}
\end{figure}

\subsection{Inference for Differential Equations} \label{Stein sec}

Consider the model given by
$\mathrm{d}x/\mathrm{d}t = f(t | \bm{\theta})$
with solution $x(t | \bm{\theta})$ depending on unknown parameters $\bm{\theta}$. Suppose we can obtain observations through the following noise model (likelihood): $y(t_i) = x(t_i|\bm{\theta}) + e_i$ at times $0 = t_1 < \ldots < t_n$ where we assume $e_i \sim N(0,\sigma^2)$ for known $\sigma>0$. Our goal is to estimate $x(T|\bm{\theta})$ for a fixed (potentially large) $T>0$. To do so, we will use a Bayesian approach and specify a prior $p(\bm{\theta})$, then obtain samples from the posterior $\pi(\bm{\theta}) := p(\bm{\theta} | y)$ using MCMC. The posterior predictive mean is then defined as: $\Pi\big(x(T|\bm{\cdot})\big)  =  \int x(T |\bm{\theta}) \pi(\bm{\theta}) \mathrm{d} \bm{\theta}$, and this can be estimated using an empirical average from the posterior samples. This type of integration problem is particularly challenging as the integrand requires simulating from the differential equation at each iteration. Furthermore, the larger $T$ or the smaller the grid, the longer the simulation will be and the higher the computational cost.

For a tractable test-bed, we considered Hooke's law, given by the following second order homogeneous ODE  given by 
\begin{equation*}
\theta_5 \frac{\mathrm{d}^2 x}{\mathrm{d} t^2}+\theta_4 \frac{\mathrm{d}x}{\mathrm{d}t}+\theta_3 x = 0,
\end{equation*}
with initial conditions $x(0)=\theta_1$ and $x'(0)=\theta_2$.  
This equation represents the evolution of a mass on a spring with friction \citep[Chapter 13]{Robinson2004}. More precisely, $\theta_3$ denotes the spring constant, $\theta_4$ the damping coefficient representing friction and $\theta_5$ the mass of the object. Since this differential equation is an overdetermined system we fixed $\theta_5=1$. In this case, if $\theta_4^2 \leq 4 \theta_3$, we get a damped oscillatory behaviour as presented in Fig. \ref{fig:ODE_1} (top). Data were generated with $\sigma=0.4$, $(\theta_1,\theta_2,\theta_3,\theta_4)=(1,3.75,2.5,0.5)$.          with log-normal priors with scale equal to $0.5$ for all parameters.

To implement \texttt{KQ} under an unknown normalisation constant for $\Pi$, we followed \citet{Oates2017} and made use of a Gaussian kernel that was adapted with Stein's method (see the Appendix for details). 
The reference distribution $\Pi_0$ was an wide uniform prior on the hypercube $[0,10]^4$. Brute force computation was used to obtain a benchmark value for the integral. For the SMC algorithm, an independent lognormal transition kernel was used at each iteration with parameters automatically tuned to the current set of particles.
Results in Fig. \ref{fig:ODE_1} demonstrate that \texttt{SMC-KQ} outperforms \texttt{KQ} for these integration problems.
These results improve upon those reported in \citet{Oates2016} for a similar integration problem based on parameter estimation for differential equations.

\begin{figure}[t!]
\vspace{2mm}
\begin{center}
\includegraphics[width = 0.7\columnwidth]{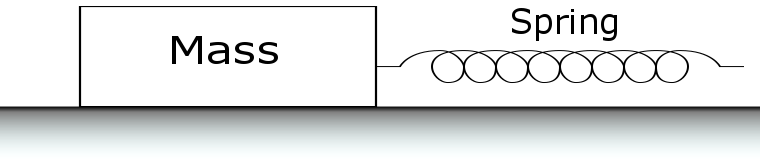}
\end{center}
\vspace{2mm}
\includegraphics[width = \columnwidth]{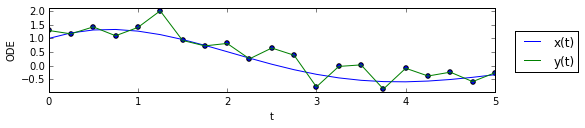}
\includegraphics[width = \columnwidth]{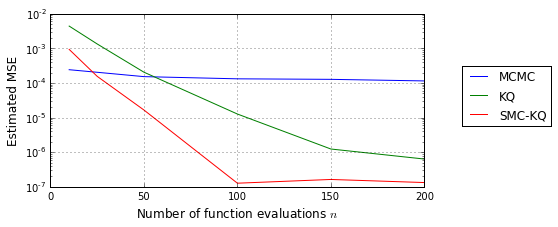}
\caption{Comparison of \texttt{SMC-KQ} and \texttt{KQ} on the ODE inverse problem. The top plot illustrates the physical system, the middle plot shows observations of the ODE, whilst the bottom plot illustrates the superior performance of \texttt{SMC-KQ} against \texttt{KQ}. }
\label{fig:ODE_1}
\end{figure}

\section{DISCUSSION} \label{sec:discussion}

In this paper we formalised the optimal sampling problem for KQ.
A general, practical solution was proposed, based on novel use of SMC methods.
Initial empirical results demonstrate performance gains relative to standard approach of KQ with $\Pi' = \Pi$.
A more challenging example based on parameter estimation for differential equations was used to illustrate the potential of \texttt{SMC-KQ} for Bayesian computation in combination with Stein's method.

Our methods were general but required user-specified choice of an initial distribution $\Pi_0$.
For compact state spaces $\mathcal{X}$ we recommend taking $\Pi_0$ to be uniform.
For non-compact spaces, however, there is a degree of flexibility here and default solutions, such as wide Gaussian distributions, necessarily require user input.
However, the choice of $\Pi_0$ is easier than the choice of $\Pi'$ itself, since $\Pi_0$ is not required to be optimal.
In our examples, improved performance (relative to standard KQ) was observed for a range of reference distributions $\Pi_0$. 

A main motivation for this research was to provide an alternative to optimisation-based KQ that alleviates strong dependence on the choice of kernel (Sec. \ref{no SBQ}).
This paper provides essential groundwork toward that goal, in developing sampling-based methods for KQ in the case of complex and expensive integration problems.
An empirical comparison of sampling-based and optimisation-based methods is reserved for future work.

Two extensions of this research are identified:
First, the curse of dimension that is intrinsic to standard Sobolev spaces can be alleviated by demanding `dominating mixed smoothness'; our methods are compatible with these (essentially tensor product) kernels \citep{Dick2013}.
Second, the use of sequential QMC \citep{Gerber2015} can be considered, motivated by further orders of magnitude reduction in numerical error observed for deterministic point sets (see Fig. \ref{fig:KQ-Halton} in the Appendix).

\subsubsection*{Acknowledgements}

FXB was supported by the EPSRC grant [EP/L016710/1]. CJO \& MG we supported by the Lloyds Register Foundation Programme on Data-Centric Engineering. WYC was supported by the ARC Centre of Excellence in Mathematical and Statistical Frontiers. MG was supported by the EPSRC grants [EP/J016934/3, EP/K034154/1, EP/P020720/1], an EPSRC Established Career Fellowship, the EU grant [EU/259348], a Royal Society Wolfson Research Merit Award. FXB, CJO, JC \& MG were also supported by the SAMSI working group on Probabilistic Numerics.

\subsubsection*{References}

\bibliographystyle{plainnat}
\begingroup
\renewcommand{\section}[2]{}%
\bibliography{SMCKQ_bib}

\begin{thebibliography}{46}
\providecommand{\natexlab}[1]{#1}
\providecommand{\url}[1]{\texttt{#1}}
\expandafter\ifx\csname urlstyle\endcsname\relax
  \providecommand{\doi}[1]{doi: #1}\else
  \providecommand{\doi}{doi: \begingroup \urlstyle{rm}\Url}\fi

\bibitem[Bach(2013)]{Bach2013}
F.~Bach.
\newblock Sharp analysis of low-rank kernel matrix approximations.
\newblock In \emph{Proc. I. Conf. Learn. Theory}, 2013.

\bibitem[Bach(2015)]{Bach2015}
F.~Bach.
\newblock {On the equivalence between kernel quadrature rules and random
  features}.
\newblock \emph{arXiv:1502.06800}, 2015.

\bibitem[Bakhvalov(1959)]{Bakhvalov1959}
N.~S. Bakhvalov.
\newblock On approximate computation of integrals.
\newblock \emph{Vestnik MGU, Ser. Math. Mech. Astron. Phys. Chem.}, 4:\penalty0
  3--18, 1959.
\newblock In Russian.

\bibitem[Berlinet and Thomas-Agnan(2011)]{Berlinet2011}
A.~Berlinet and C.~Thomas-Agnan.
\newblock \emph{Reproducing kernel Hilbert spaces in probability and
  statistics}.
\newblock Springer Science \& Business Media, 2011.

\bibitem[Briol et~al.(2015{\natexlab{a}})Briol, Oates, Girolami, and
  Osborne]{Briol2015}
F-X. Briol, C.~J. Oates, M.~Girolami, and M.~A. Osborne.
\newblock {F}rank-{W}olfe {B}ayesian quadrature: {P}robabilistic integration
  with theoretical guarantees.
\newblock In \emph{Adv. Neur. Inf. Proc. Sys.}, 2015{\natexlab{a}}.

\bibitem[Briol et~al.(2015{\natexlab{b}})Briol, Oates, Girolami, Osborne, and
  Sejdinovic]{Briol2016}
F-X. Briol, C.~J. Oates, M.~Girolami, M.~A. Osborne, and D.~Sejdinovic.
\newblock {Probabilistic integration: A role for statisticians in numerical
  analysis?}
\newblock \emph{arXiv:1512.00933}, 2015{\natexlab{b}}.

\bibitem[Chopin(2002)]{Chopin2002}
N.~Chopin.
\newblock A sequential particle filter method for static models.
\newblock \emph{Biometrika}, 89\penalty0 (3):\penalty0 539--552, 2002.

\bibitem[Cohen and Migliorati(2016)]{Cohen2016}
A.~Cohen and G.~Migliorati.
\newblock Optimal weighted least-squares methods.
\newblock \emph{arXiv:1608.00512}, 2016.

\bibitem[Del~Moral et~al.(2006)Del~Moral, Doucet, and Jasra]{Moral2006}
P.~Del~Moral, A.~Doucet, and A.~Jasra.
\newblock {Sequential monte carlo samplers}.
\newblock \emph{J. R. Stat. Soc. Ser. B. Stat. Methodol.}, 68:\penalty0
  411--436, 2006.

\bibitem[Diaconis(1988)]{Diaconis1988}
P.~Diaconis.
\newblock \emph{{Bayesian Numerical Analysis}}, volume~IV of \emph{{Statistical
  Decision Theory and Related Topics}}, pages 163--175.
\newblock Springer-Verlag, New York, 1988.

\bibitem[Dick and Pillichshammer(2010)]{Dick2010}
J.~Dick and F.~Pillichshammer.
\newblock \emph{Digital nets and sequences: Discrepancy Theory and Quasi--Monte
  Carlo Integration}.
\newblock Cambridge University Press, 2010.

\bibitem[Dick et~al.(2013)Dick, Kuo, and Sloan]{Dick2013}
J.~Dick, F.~Y. Kuo, and I.~H. Sloan.
\newblock {High-dimensional integration: The quasi-Monte Carlo way}.
\newblock \emph{Acta Numerica}, 22:\penalty0 133--288, 2013.

\bibitem[Drineas et~al.(2012)Drineas, Magdon-Ismail, Mahoney, and
  Woodruff]{Drineas2012}
P.~Drineas, M.~Magdon-Ismail, M.~W. Mahoney, and D.~P. Woodruff.
\newblock Fast approximation of matrix coherence and statistical leverage.
\newblock \emph{J. Mach. Learn. Res.}, 13:\penalty0 3475--3506, 2012.

\bibitem[Eftang and Stamm(2012)]{Eftang2012}
J.~L. Eftang and B.~Stamm.
\newblock Parameter multi-domain `hp' empirical interpolation.
\newblock \emph{I. J. Numer. Methods in Eng.}, 90\penalty0 (4):\penalty0
  412--428, 2012.

\bibitem[Gerber and Chopin(2015)]{Gerber2015}
M.~Gerber and N.~Chopin.
\newblock Sequential quasi {Monte} {Carlo}.
\newblock \emph{J. R. Statist. Soc. B}, 77\penalty0 (3):\penalty0 509--579,
  2015.

\bibitem[Gunter et~al.(2014)Gunter, Garnett, Osborne, Hennig, and
  Roberts]{Gunter2014}
T.~Gunter, R.~Garnett, M.~Osborne, P.~Hennig, and S.~Roberts.
\newblock {Sampling for inference in probabilistic models with fast Bayesian
  quadrature}.
\newblock In \emph{Adv. Neur. Inf. Proc. Sys.}, 2014.

\bibitem[Hampton and Doostan(2015)]{Hampton2015}
J.~Hampton and A.~Doostan.
\newblock {Coherence motivated sampling and convergence analysis of least
  squares polynomial Chaos regression}.
\newblock \emph{Comput. Methods Appl. Mech. Engrg.}, 290:\penalty0 73--97,
  2015.

\bibitem[Hinrichs(2010)]{Hinrichs2010}
A.~Hinrichs.
\newblock {Optimal importance sampling for the approximation of integrals}.
\newblock \emph{J. Complexity}, 26\penalty0 (2):\penalty0 125--134, 2010.

\bibitem[Huszar and Duvenaud(2012)]{Huszar2012}
F.~Huszar and D.~Duvenaud.
\newblock Optimally-weighted herding is {B}ayesian quadrature.
\newblock In \emph{Uncert. Artif. Intell.}, 2012.

\bibitem[Kanagawa et~al.(2016)Kanagawa, Sriperumbudur, and
  Fukumizu]{Kanagawa2016}
M.~Kanagawa, B.~Sriperumbudur, and K.~Fukumizu.
\newblock {Convergence guarantees for kernel-based quadrature rules in
  misspecified settings}.
\newblock In \emph{Adv. Neur. Inf. Proc. Sys.}, 2016.

\bibitem[Kersting and Hennig(2016)]{Kersting2016}
H.~Kersting and P.~Hennig.
\newblock Active uncertainty calibration in bayesian ode solvers.
\newblock In \emph{Proc. Conf. Uncert. Artif. Intell.}, 2016.

\bibitem[K{\"o}nig(1986)]{Koenig1986}
H.~K{\"o}nig.
\newblock Eigenvalues of compact operators with applications to integral
  operators.
\newblock \emph{Linear Algebra Appl.}, 84:\penalty0 111--122, 1986.

\bibitem[Kristoffersen(2013)]{Kristoffersen2013}
S.~Kristoffersen.
\newblock The empirical interpolation method.
\newblock Master's thesis, Department of Mathematical Sciences, Norwegian
  University of Science and Technology, 2013.

\bibitem[Liu and Lee(2017)]{Liu2016}
Q.~Liu and J.~D. Lee.
\newblock {Black-Box Importance Sampling}.
\newblock \emph{I. Conf. Artif. Intell. Stat.}, 2017.

\bibitem[Ma et~al.(2014)Ma, Garnett, and Schneider]{Ma2014}
Y.~Ma, R.~Garnett, and J.~Schneider.
\newblock {Active Area Search via Bayesian Quadrature}.
\newblock \emph{I. Conf Artif. Intell. Stat.}, 33, 2014.

\bibitem[Mahoney(2011)]{Mahoney2011}
M.~W. Mahoney.
\newblock Randomized algorithms for matrices and data.
\newblock \emph{Found. Trends Mach. Learn.}, 3\penalty0 (2):\penalty0 123--224,
  2011.

\bibitem[Oates et~al.(2016)Oates, Cockayne, Briol, and Girolami]{Oates2016}
C.~J. Oates, J.~Cockayne, F-X. Briol, and M.~Girolami.
\newblock {Convergence Rates for a Class of Estimators Based on Stein's
  Identity}.
\newblock \emph{{arXiv:1603.03220}}, 2016.

\bibitem[Oates et~al.(2017)Oates, Girolami, and Chopin]{Oates2017}
C.~J. Oates, M.~Girolami, and N.~Chopin.
\newblock {Control Functionals for Monte Carlo Integration}.
\newblock \emph{J. R. Stat. Soc. Ser. B. Stat. Methodol.}, 2017.
\newblock To appear.

\bibitem[O'Hagan(1991)]{OHagan1991}
A.~O'Hagan.
\newblock {B}ayes-{H}ermite quadrature.
\newblock \emph{J. Statist. Plann. Inference}, 29:\penalty0 245--260, 1991.

\bibitem[Osborne et~al.(2012{\natexlab{a}})Osborne, Duvenaud, Garnett,
  Rasmussen, Roberts, and Ghahramani]{Osborne2012active}
M.~A. Osborne, D.~Duvenaud, R.~Garnett, C.~E. Rasmussen, S.~Roberts, and
  Z.~Ghahramani.
\newblock {Active learning of model evidence using Bayesian quadrature}.
\newblock In \emph{Adv. Neur. Inf. Proc. Sys.}, 2012{\natexlab{a}}.

\bibitem[Osborne et~al.(2012{\natexlab{b}})Osborne, Garnett, Roberts, Hart,
  Aigrain, and Gibson]{Osborne2012}
M.~A. Osborne, R.~Garnett, S.~Roberts, C.~Hart, S.~Aigrain, and N.~Gibson.
\newblock {Bayesian quadrature for ratios}.
\newblock In \emph{Proc. I. Conf. Artif. Intell. Stat.}, 2012{\natexlab{b}}.

\bibitem[Patel et~al.(2015)Patel, Goldstein, Dyer, A.Mirhoseini, and
  Baraniuk]{Patel2015}
R.~Patel, T.~A. Goldstein, E.~L. Dyer, A.Mirhoseini, and R.~G. Baraniuk.
\newblock {OASIS: Adaptive Column Sampling for Kernel Matrix Approximation}.
\newblock \emph{arXiv:1505.05208}, 2015.

\bibitem[Paul et~al.(2016)Paul, Ciosek, Osborne, and Whiteson]{Paul2016}
S.~Paul, K.~Ciosek, M.~A. Osborne, and S.~Whiteson.
\newblock {Alternating Optimisation and Quadrature for Robust Reinforcement
  Learning}.
\newblock \emph{arXiv:1605.07496}, 2016.

\bibitem[Plaskota et~al.(2009)Plaskota, Wasilkowski, and Zhao]{Plaskota2009}
L.~Plaskota, G.W. Wasilkowski, and Y.~Zhao.
\newblock {New averaging technique for approximating weighted integrals}.
\newblock \emph{J. Complexity}, 25\penalty0 (3):\penalty0 268--291, 2009.

\bibitem[Pr{\"{u}}her and {\v{S}}imandl(2015)]{Pruher2015}
J.~Pr{\"{u}}her and M.~{\v{S}}imandl.
\newblock {Bayesian Quadrature in Nonlinear Filtering}.
\newblock In \emph{12th I. Conf. Inform. Control Autom. Robot.}, 2015.

\bibitem[Rasmussen and Ghahramani(2002)]{Rasmussen2002}
C.~E. Rasmussen and Z.~Ghahramani.
\newblock {Bayesian Monte Carlo}.
\newblock In \emph{Adv. Neur. Inf. Proc. Sys.}, 2002.

\bibitem[Robert and Casella(2013)]{Robert2013}
C.~Robert and G.~Casella.
\newblock \emph{Monte Carlo statistical methods}.
\newblock Springer Science \& Business Media, 2013.

\bibitem[Robinson(2004)]{Robinson2004}
J.~C. Robinson.
\newblock \emph{An introduction to ordinary differential equations}.
\newblock Cambridge University Press, 2004.

\bibitem[S\"{a}rkk\"{a} et~al.(2015)S\"{a}rkk\"{a}, Hartikainen, Svensson, and
  Sandblom]{Sarkka2015}
S.~S\"{a}rkk\"{a}, J.~Hartikainen, L.~Svensson, and F.~Sandblom.
\newblock {On the relation between Gaussian process quadratures and sigma-point
  methods}.
\newblock \emph{arXiv:1504.05994}, 2015.

\bibitem[Shi et~al.(2009)Shi, Belkin, and Yu]{Shi2009}
T.~Shi, M.~Belkin, and B.~Yu.
\newblock {Data spectroscopy: Eigenspaces of convolution operators and
  clustering}.
\newblock \emph{Ann. Statist.}, 37\penalty0 (6):\penalty0 3960--3984, 2009.

\bibitem[Simon(1979)]{Simon1979}
B.~Simon.
\newblock \emph{Trace Ideals and Their Applications}.
\newblock Cambridge University Press, 1979.

\bibitem[Smola et~al.(2007)Smola, Gretton, Song, and Sch{\"o}lkopf]{Smola2007}
A.~Smola, A.~Gretton, L.~Song, and B.~Sch{\"o}lkopf.
\newblock A {H}ilbert space embedding for distributions.
\newblock In \emph{Algorithmic Learn. Theor.}, pages 13--31, 2007.

\bibitem[Sommariva and Vianello(2006)]{Sommariva2006}
A.~Sommariva and M.~Vianello.
\newblock Numerical cubature on scattered data by radial basis functions.
\newblock \emph{Computing}, 76\penalty0 (3-4):\penalty0 295--310, 2006.

\bibitem[Temme(1996)]{Temme1996}
N.~M. Temme.
\newblock \emph{Special Functions: An Introduction to the Classical Functions
  of Mathematical Physics}.
\newblock Wiley, New York, 1996.

\bibitem[Wendland(2004)]{Wendland2004}
H.~Wendland.
\newblock \emph{Scattered Data Approximation}.
\newblock {Cambridge University Press}, 2004.

\bibitem[Zhou et~al.(2016)Zhou, Johansen, and Aston]{Zhou2016}
Y.~Zhou, A.~M. Johansen, and J.~A.~D. Aston.
\newblock {Towards Automatic Model Comparison: An Adaptive Sequential Monte
  Carlo Approach}.
\newblock \emph{J. Comput. Graph. Statist.}, 25\penalty0 (3):\penalty0
  701--726, 2016.

\end{thebibliography}
\endgroup

\clearpage

\newpage


\appendix
\setcounter{page}{1}

\section{Appendix}

This appendix complements the paper ``On the sampling problem for kernel quadrature". Section \ref{appendix:robustness_greedy_opt} discusses the potential lack of robustness of greedy optimization methods, which motivated the development of $\texttt{SMC-KQ}$. Sections \ref{appendix:additional_definitions} and \ref{appendix:theoretical_results} discuss some of the theoretical aspects of KQ, whilst Section \ref{appendix:R} and \ref{appendix:experimental_results} presents additional numerical experiments and details for implementation. Finally, Section \ref{appendix:algorithms_pseudocode} provides detailed pseudo-code for all algorithms used in this paper.

\subsection{Lack of Robustness of Optimisation Methods} \label{appendix:robustness_greedy_opt}

To demonstrate the non-robustness to mis-specified kernels, that is a feature of optimisation-based methods, we considered integration against $\Pi = \mathrm{N}(0,1)$ for functions that can be approximated by the kernel $k(x,x') = \exp(-(x-x')^2 / \ell^2)$.
An initial state $x_1$ was fixed at the origin and then for $n = 2,3,\dots$ the state $x_n$ was chosen to minimise the error criterion $e_n(\bm{w} ; \{x_j\}_{j=1}^n )$ given the location of the $\{x_j\}_{j = 1}^n$.
This is known as `sequential Bayesian quadrature' \citep[\texttt{SBQ};][]{Huszar2012,Gunter2014,Briol2015}.
The kernel length scale was fixed at $\ell = 0.01$ and we consider (as a thought experiment, since it does not enter into our selection of points) a more regular integrand, such as that shown in Fig. \ref{SBQ} (top).
The location of the states $\{x_j\}_{j=1}^n$ obtained in this manner are shown in Fig. \ref{SBQ} (bottom).
It is clear that \texttt{SBQ} is not an efficient use of computation for integration of the integrand against $\mathrm{N}(0,1)$. Of course, a bad choice of kernel length scale parameter $\ell$ can in principle be alleviated by kernel learning, but this will not be robust the case where $n$ is very small.

This example motivates sampling-based methods as an alternative to optimisation-based methods.
Future work will be required to better understand when methods such as \texttt{SBQ} can be reliable in the presence of unknown kernel parameters, but this was beyond the scope of this work.

\begin{figure}[t!]
\includegraphics[width = 0.5\textwidth]{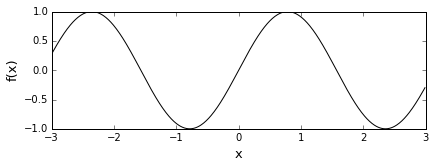}
\includegraphics[width = 0.5\textwidth]{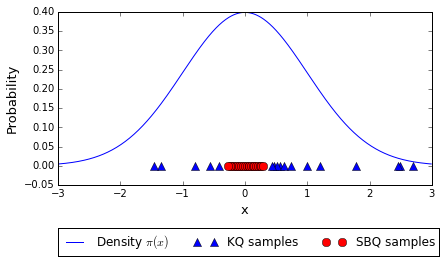}
\caption{Sequential minimisation of the error criterion $e_n(\bm{w}; \{\bm{x}_j\}_{j=1}^n )$, denoted \texttt{SBQ}, does not lead to adequate placement of points $\{\bm{x}_j\}_{j=1}^n$ when the kernel is mis-specified.
[Here the kernel length scale was fixed to $\ell = 0.01$.
Selected points $x_j$ are represented as red.
For comparison, a collection of draws from $\Pi$, as used in \texttt{KQ}, are shown as blue points.]}
\label{SBQ}
\end{figure}

\subsection{Additional Definitions}
\label{appendix:additional_definitions}

The space $L_2(\Pi)$ is defined to be the set of $\Pi$-measurable functions $f : \mathcal{X} \rightarrow \mathbb{R}$ such that the Lebesgue integral
$$
\int_{\mathcal{X}} f^2 \; \mathrm{d}\Pi
$$
exists and is finite.

For a multi-index $\bm{\alpha} = (\alpha_1,\dots,\alpha_d)$ define $|\bm{\alpha}| = \alpha_1 + \dots + \alpha_d$.
The (standard) Sobolev space of order $s \in \mathbb{N}$ is denoted 
\begin{eqnarray*}
\mathbb{H}_s(\Pi) \; = \; \{ f : \mathcal{X} \rightarrow \mathbb{R} \text{ s.t. } \hspace{110pt} \\
(\partial x_1)^{\alpha_1} \dots (\partial x_d)^{\alpha_d} f \in L_2(\Pi) \; \forall \; |\bm{\alpha}| \leq s \}.
\end{eqnarray*}
This space is equipped with norm
$$
\| f \|_{\mathbb{H}_s(\Pi)} = \Bigg( \sum_{|\bm{\alpha}| \leq s} \|(\partial x_1)^{\alpha_1} \dots (\partial x_d)^{\alpha_d} f\|_{L_2(\Pi)}^2 \Bigg)^{1/2}.
$$

Two normed spaces $(\mathcal{F},\|\cdot\|)$ and $(\mathcal{F},\|\cdot\|')$ are said to be `norm equivalent' if there exists $0 < c < \infty$ such that 
$$
c^{-1} \|f\|' \leq \|f\| \leq c \|f\|'
$$
for all $f \in \mathcal{F}$.

\subsection{Theoretical Results} \label{appendix:theoretical_results}

\subsubsection{Proof of Theorem \ref{theo:BMC}}

\begin{proof}
From Thm. 11.13 in \citet{Wendland2004} we have that there exist constants $0 < c_k < \infty$, $h_0 > 0$ such that
\begin{eqnarray}
|\hat{f}(\bm{x}) - f(\bm{x})| \leq c_k h_n^s \|f\|_{\mathcal{H}} \label{linf bound}
\end{eqnarray}
for all $\bm{x} \in \mathcal{X}$, provided $h_n < h_0$, where
$$
h_n = \sup_{\bm{x} \in \mathcal{X}} \min_{i = 1,\dots,n} \|\bm{x} - \bm{x}_i \|_2.
$$
Under the hypotheses, we can suppose that the deterministic states $\bm{x}_1,\dots,\bm{x}_m$ ensure $h_m < h_0$.
Then Eqn. \ref{linf bound} holds for all $n > m$, where the $\bm{x}_{m+1}, \dots, \bm{x}_n$ are independent draws from $\Pi'$.
It follows that
\begin{eqnarray*}
|\hat{\Pi}(f) - \Pi(f)| & \leq & \sup_{\bm{x} \in \mathcal{X}} |\hat{f}(\bm{x}) - f(\bm{x})| \\
& \leq & c_k h_n^s \|f\|_{\mathcal{H}}.
\end{eqnarray*}

Next, Lem. 1 in \citet{Oates2016} establishes that, under the present hypotheses on $\mathcal{X}$ and $\Pi'$, there exists $0 < c_{\Pi',\epsilon} < \infty$ such that
$$
\mathbb{E} [h_n^{2s}] \leq c_{\Pi',\epsilon} m^{-2s/d + \epsilon}
$$
for all $\epsilon > 0$, where $c_{\Pi',\epsilon}$ is independent of $n$.

Combining the above results produces
\begin{eqnarray*}
\mathbb{E} [\hat{\Pi}(f) - \Pi(f) ]^2 & \leq & c_k^2 \mathbb{E}[h_n^{2s}] \|f\|_{\mathcal{H}}^2 \\
& \leq & c_k^2 c_{\Pi',\epsilon} m^{-2s/d + \epsilon} \|f\|_{\mathcal{H}}^2
\end{eqnarray*}
as required, with $c_{k,\Pi',\epsilon} = c_k c_{\Pi',\epsilon}^{1/2}$.
\end{proof}

\subsubsection{Proof of Theorem \ref{bach result}}

\begin{proof}
The Cauchy-Schwarz result for kernel mean embeddings \citep{Smola2007} gives
\begin{eqnarray}
& & |\hat{\Pi}(f) - \Pi(f)| \label{CS} \\
& \leq & \left\| \sum_{i=1}^n w_i k(\cdot,\bm{x}_i) - \int_{\mathcal{X}} k(\cdot,\bm{x}) \Pi(\mathrm{d}\bm{x}) \right\|_{\mathcal{H}} \|f\|_{\mathcal{H}}. \nonumber
\end{eqnarray}
Consider the first term above.
Since $\mathcal{H}$ is dense in $L_2(\Pi)$, it follows that $\Sigma^{1/2}$ (the unique positive self-adjoint square root of $\Sigma$) is an isometry from $L_2(\Pi)$ to $\mathcal{H}$.
Now, since $k(\cdot,\bm{x}) \in \mathcal{H}$, there exists a unique element $\psi(\cdot,\bm{x}) \in L_2(\Pi)$ such that $\Sigma^{1/2} \psi(\cdot,\bm{x}) = k(\cdot,\bm{x})$.
Then we have that
\begin{eqnarray*}
& & \left\| \sum_{i=1}^n w_i k(\cdot,\bm{x}_i) - \int_{\mathcal{X}} k(\cdot,\bm{x}) \Pi(\mathrm{d}\bm{x}) \right\|_{\mathcal{H}} \\
& = & \left\| \sum_{i=1}^n w_i \Sigma^{1/2}\psi(\cdot,\bm{x}_i) - \int_{\mathcal{X}} \Sigma^{1/2}\psi(\cdot,\bm{x}) \Pi(\mathrm{d}\bm{x}) \right\|_{\mathcal{H}} \\
& = & \left\| \sum_{i=1}^n w_i \psi(\cdot,\bm{x}_i) - \int_{\mathcal{X}} \psi(\cdot,\bm{x}) \Pi(\mathrm{d}\bm{x}) \right\|_{L_2(\Pi)} .
\end{eqnarray*}

For $f \in L_2(\Pi)$, we have $f \in \mathcal{H}$ if and only if 
\begin{eqnarray}
f = \int_{\mathcal{X}} g(\bm{x}) \psi(\cdot,\bm{x}) \Pi(\mathrm{d}\bm{x}) \label{particular f}
\end{eqnarray}
for some $g \in L_2(\Pi)$, in which case $\|f\|_{\mathcal{H}}$ is equal to the infimum of $\|g\|_{L_2(\Pi)}$ under all such representations $g$.
In particular, it follows that $\|f\|_{\mathcal{H}} = 1$ for the particular choice with $g(\bm{x}) = 1$ for all $\bm{x} \in \mathcal{X}$.

Under the hypothesis on $n$, Prop. 1 of \citet{Bach2015} established that when $\bm{x}_1,\dots,\bm{x}_n \sim \Pi_{\text{B}}$ are independent, then
\begin{eqnarray*}
\sup_{\|f\|_{\mathcal{H}} \leq 1} \inf_{\|\bm{\beta}\|_2^2 \leq \frac{4}{n}} \left\| \sum_{i=1}^n \frac{\beta_i}{\pi_{\text{B}}(\bm{x}_i)^{1/2}} \psi(\cdot,\bm{x}_i) - f \right\|_{L_2(\Pi)}^2 \leq 4 \lambda
\end{eqnarray*}
with probability at least $1 - \delta$.
Fixing the function $f$ in Eqn. \ref{particular f} leads to the statement that
\begin{eqnarray*}
\inf_{\|\bm{\beta}\|_2^2 \leq \frac{4}{n}} \left\| \sum_{i=1}^n \frac{\beta_i}{\pi_{\text{B}}(\bm{x}_i)^{1/2}} \psi(\cdot,\bm{x}_i) - \int_{\mathcal{X}} \psi(\cdot,\bm{x}) \Pi(\mathrm{d}\bm{x}) \right\|_{L_2(\Pi)}^2 
\end{eqnarray*}
is at most $4 \lambda$ with probability at least $1 - \delta$.
The infimum over $\|\bm{\beta}\|_2^2 \leq 4/n$ can be replaced with an unconstrained infimum over $\mathbb{R}^n$ to obtain the weaker statement that
\begin{eqnarray*}
\inf_{\bm{\beta} \in \mathbb{R}^n} \left\| \sum_{i=1}^n \frac{\beta_i}{\pi_{\text{B}}(\bm{x}_i)^{1/2}} \psi(\cdot,\bm{x}_i) - \int_{\mathcal{X}} \psi(\cdot,\bm{x}) \Pi(\mathrm{d}\bm{x}) \right\|_{L_2(\Pi)}^2 
\end{eqnarray*}
is at most $4 \lambda$ with probability at least $1 - \delta$.
Now, recall from Sec. \ref{subsec:overview} that the KQ weights $\bm{w}$ are characterised through the solution $\bm{\beta}^*$ to this optimisation problem as $w_i = \beta_i^* \pi_{\text{B}}(\bm{x}_i)^{-1/2}$.
It follows that
\begin{eqnarray*}
\left\| \sum_{i=1}^n w_i \psi(\cdot,\bm{x}_i) - \int_{\mathcal{X}} \psi(\cdot,\bm{x}) \Pi(\mathrm{d}\bm{x}) \right\|_{L_2(\Pi)}^2 \leq 4 \lambda
\end{eqnarray*}
with probability at least $1 - \delta$.
Combining this fact with Eqn. \ref{CS} completes the proof.
\end{proof}

\subsubsection{$\Pi_{\text{B}}$ for the Example of Figure \ref{fig:sample sensitive}}

In this section we consider scope to derive $\Pi_{\text{B}}$ in closed-form for the example of Fig. \ref{fig:sample sensitive}.
The following will be used:

\begin{proposition}[Prop. 1 in \citet{Shi2009}] \label{Shi prop}
Let $\mathcal{X} = \mathbb{R}$, $\Pi = \mathrm{N}(\mu,\sigma^2)$ and $k(x,x') = \exp(-(x-x')^2/\ell^2)$.
Define $\beta = 4\sigma^2/\ell^2$ and denote the $j$th Hermite polynomial as $H_j(x)$. 
Then the eigenvalues $\mu_j$ and corresponding eigenfunctions $e_j$ of the integral operator $\Sigma$ are
\begin{eqnarray*}
\mu_j = \sqrt{\frac{2}{(1 + \beta + \sqrt{1+2 \beta})}} \times \Big( \frac{\beta}{1 + \beta + \sqrt{1+2 \beta}} \Big)^j
\end{eqnarray*}
and
\begin{eqnarray*}
e_j(x) = \frac{(1 + 2 \beta)^{1/8}}{\sqrt{2^j j!}} \exp\Big( - \frac{(x - \mu)^2}{2 \sigma^2} \frac{\sqrt{1+2\beta} -1}{2}\Big) \\
\times H_j\Big(\Big(\frac{1}{4}+\frac{\beta}{2}\Big)^{1/4} \frac{x - \mu}{\sigma}\Big)
\end{eqnarray*}  
for $j \in \{0,1,2,\dots\}$.
\end{proposition}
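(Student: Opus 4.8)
The plan is to verify the claimed Mercer decomposition by recognising $\Sigma$ as a rescaled Mehler (Gaussian) integral operator, whose eigenstructure is given explicitly by Mehler's formula. First I would remove the mean by the translation $x \mapsto x - \mu$; since both $k$ and the Gaussian density of $\Pi$ depend only on centred coordinates, this reduces the problem to $\mu = 0$, and the general-$\mu$ eigenfunctions are recovered at the end by translating back. Writing $\pi(x) = (2\pi\sigma^2)^{-1/2}\exp(-x^2/(2\sigma^2))$ for the centred density, it is convenient to symmetrise: the operator $\Sigma$ on $L_2(\Pi)$ is unitarily equivalent, via $f \mapsto \pi^{1/2} f$, to the operator on $L_2(\mathrm{d}x)$ with the symmetric kernel $\tilde{k}(x,x') = \pi(x)^{1/2} k(x,x') \pi(x')^{1/2}$. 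The eigenvalues are unchanged and the eigenfunctions satisfy $\tilde{e}_j = \pi^{1/2} e_j$, so it suffices to diagonalise $\tilde{k}$.

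The function $\tilde{k}$ is the exponential of a negative-definite quadratic form in $(x,x')$, namely $\tilde{k}(x,x') = (2\pi\sigma^2)^{-1/2}\exp\big(-\tfrac{x^2+x'^2}{4\sigma^2} - \tfrac{(x-x')^2}{\ell^2}\big)$. The next step is to choose a scaling $x = c\,u$ so that, after completing the square, $\tilde{k}$ takes the canonical Mehler form proportional to $\exp(-\tfrac12(u^2+u'^2))\exp\big(-\tfrac{\rho^2(u^2+u'^2) - 2\rho u u'}{1-\rho^2}\big)/\sqrt{1-\rho^2}$ for some $\rho \in (0,1)$ and scale $c$ to be determined. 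Mehler's formula then yields the bilinear eigenexpansion $\tilde{k} = \sum_{j\ge 0} \mu_j\, \tilde{e}_j \tilde{e}_j$ in which $\tilde{e}_j(x) \propto H_j(\alpha x)\exp(-\tfrac12\alpha^2 x^2)$ are rescaled Hermite functions and the eigenvalues are $\mu_j \propto \rho^j$, with the constant of proportionality fixed by the Mehler prefactor. Reading off $\rho$ and the prefactor in terms of $\beta = 4\sigma^2/\ell^2$, then dividing $\tilde{e}_j$ by $\pi^{1/2}$ and re-instating $\mu$, should reproduce exactly the stated $\mu_j$ and $e_j$; in particular one finds $\alpha = (\tfrac14+\tfrac{\beta}{2})^{1/4}/\sigma$, matching the Hermite argument in the claim.

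The final step is to check that nothing is missed. The Hermite functions form a complete orthonormal system of $L_2(\mathrm{d}x)$, so the $\tilde{e}_j$ exhaust the spectrum and the Mercer series is complete; transporting this back through the unitary $f\mapsto\pi^{1/2}f$ shows $\{e_j\}$ is a complete orthonormal basis of $L_2(\Pi)$ consisting of eigenfunctions of $\Sigma$. The normalisation constant $(1+2\beta)^{1/8}/\sqrt{2^j j!}$ in the claimed $e_j$ is then confirmed by the Hermite orthogonality relation $\int H_m(\xi)H_n(\xi)e^{-\xi^2}\,\mathrm{d}\xi = \sqrt{\pi}\,2^n n!\,\delta_{mn}$ after the appropriate change of variables.

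I expect the main obstacle to be purely algebraic bookkeeping: determining the scale $c$ (equivalently $\alpha$) and the Mehler parameter $\rho = \beta/(1+\beta+\sqrt{1+2\beta})$ as explicit functions of $\beta$, and matching the multiplicative prefactor $\sqrt{2/(1+\beta+\sqrt{1+2\beta})}$. This amounts to completing the square in the quadratic form and solving a small system so that the ratio of the diagonal coefficient (of $u^2+u'^2$) to the cross coefficient (of $u u'$) agrees with the Mehler form; the appearance of $\sqrt{1+2\beta}$ is precisely the discriminant that arises in that step. The underlying spectral-theoretic content (self-adjointness, the trace-class property, and completeness) is already guaranteed by the standing assumptions on $\Sigma$, so no further functional-analytic work is required.
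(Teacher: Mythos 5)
The paper never proves this proposition: it is imported verbatim, by citation, as Prop.~1 of \citet{Shi2009}, and is used only as an ingredient in the derivation of $\Pi_{\mathrm{B}}$ for the running example (Prop.~\ref{Bach exp deriv}). Your sketch is therefore a genuine, self-contained derivation of something the paper takes on faith, and it is correct. The route you propose is the classical one: conjugating $\Sigma$ by the unitary $f \mapsto \pi^{1/2}f$ reduces the problem to the symmetric kernel $\tilde{k}(x,x') = (2\pi\sigma^2)^{-1/2}\exp\big(-\tfrac{x^2+x'^2}{4\sigma^2}-\tfrac{(x-x')^2}{\ell^2}\big)$, and matching its quadratic form to the Mehler form after a scaling $x = cu$ gives $\beta\rho^2 - 2(1+\beta)\rho + \beta = 0$, whose root in $(0,1)$ is $\rho = \big((1+\beta)-\sqrt{1+2\beta}\big)/\beta = \beta/(1+\beta+\sqrt{1+2\beta})$, together with $c^2 = 2\sigma^2/\sqrt{1+2\beta}$; these yield exactly the claimed Hermite argument $(\tfrac14+\tfrac{\beta}{2})^{1/4}(x-\mu)/\sigma$, the eigenvalue prefactor $c\sqrt{(1-\rho^2)/(2\sigma^2)} = \sqrt{2\rho/\beta} = \sqrt{2/(1+\beta+\sqrt{1+2\beta})}$, and the normalisation $(2\sigma^2)^{1/4}/c^{1/2} = (1+2\beta)^{1/8}$, so all constants check out. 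Two remarks: first, when you rescale you must remember that the $L_2(\mathrm{d}x)$ operator with kernel $\tilde{k}$ is unitarily equivalent to the $L_2(\mathrm{d}u)$ operator with kernel $c\,\tilde{k}(cu,cu')$ (the Jacobian factor $c$), otherwise the eigenvalue prefactor comes out wrong; your sketch elides this, but it is part of the bookkeeping you flag. Second, the key identity you invoke is precisely the paper's own Prop.~\ref{generating lemma} (the bilinear generating function, with $t = \rho/2$), so your argument can be phrased entirely with tools already stated in the appendix---a tidy bonus, since it makes the paper's citation of \citet{Shi2009} redundant in principle.
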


\begin{proposition}[Ex. 6.8 in \citet{Temme1996}, p.167] \label{generating lemma}
The bilinear generating function for Hermite polynomials is
\begin{eqnarray*}
\sum_{j=0}^\infty \frac{t^j}{j!} H_j(x) H_j(z) \hspace{110pt} \\
= \frac{1}{\sqrt{1 - 4t^2}} \exp\left( x^2 - \frac{(x - 2zt)^2}{1 - 4t^2} \right).
\end{eqnarray*}
\end{proposition}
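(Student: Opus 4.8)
The stated identity is \emph{Mehler's formula} for the (physicists') Hermite polynomials, which I fix here through the Rodrigues formula $H_n(x) = (-1)^n e^{x^2} (\mathrm{d}^n/\mathrm{d}x^n) e^{-x^2}$, or equivalently through the exponential generating function $\sum_{n=0}^\infty \frac{s^n}{n!} H_n(x) = e^{2xs - s^2}$. My plan is to prove it by combining this generating function with the Gaussian (Fourier) integral representation of a single Hermite factor, which collapses the whole bilinear series into one explicit Gaussian integral. Throughout I would work in the regime $|t| < 1/2$, where the series converges absolutely and the eventual prefactor $(1 - 4t^2)^{-1/2}$ is well defined; the general case then follows by analytic continuation in $t$.

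First I would record the integral representation $H_n(x) = \frac{2^n}{\sqrt{\pi}} \int_{-\infty}^\infty (x + iu)^n e^{-u^2}\,\mathrm{d}u$, which is immediate from the generating function together with the moment-generating identity $\frac{1}{\sqrt\pi}\int e^{-u^2 + 2\xi u}\,\mathrm{d}u = e^{\xi^2}$. Substituting this representation for the factor $H_n(x)$ only, and leaving $H_n(z)$ intact, gives
\begin{equation*}
\sum_{n=0}^\infty \frac{t^n}{n!} H_n(x) H_n(z) = \frac{1}{\sqrt\pi} \int_{-\infty}^\infty e^{-u^2} \sum_{n=0}^\infty \frac{\bigl(2t(x+iu)\bigr)^n}{n!} H_n(z)\, \mathrm{d}u ,
\end{equation*}
where the interchange of summation and integration is the first point requiring justification (dominated convergence, using the Gaussian factor and $|t| < 1/2$). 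Recognising the inner sum as the generating function evaluated at $s = 2t(x+iu)$ collapses it to $\exp\!\bigl(4tz(x+iu) - 4t^2(x+iu)^2\bigr)$, leaving a single Gaussian integral in the real variable $u$.

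The remaining step is to evaluate that integral. Expanding the exponent as a quadratic in $u$ yields a leading coefficient $-(1-4t^2)$, whose real part is positive exactly when $|t| < 1/2$, and an imaginary linear term $4it(z - 2tx)\,u$. Completing the square and applying $\int e^{-a u^2 + b u}\,\mathrm{d}u = \sqrt{\pi/a}\, e^{b^2/(4a)}$ produces the prefactor $(1-4t^2)^{-1/2}$ together with an exponent that, after an elementary algebraic rearrangement, simplifies to $(4txz - 4t^2(x^2+z^2))/(1-4t^2)$; this in turn equals the stated $x^2 - (x - 2zt)^2/(1-4t^2)$, completing the identification.

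I expect the main obstacles to be the two analytic-justification points rather than the algebra. The interchange of sum and integral must be controlled uniformly on the relevant range of $t$, and the Gaussian formula is being applied with a purely imaginary $b$, which I would justify either by shifting the contour using analyticity of the integrand, or by first proving the identity for real $b$ and extending by analytic continuation in $b$. Since both sides are analytic in $t$ on $|t| < 1/2$, once equality is established there it propagates to the full stated domain. An alternative route I would keep in reserve, avoiding the contour argument, is to represent \emph{both} Hermite factors by the Gaussian integral and evaluate the resulting two-dimensional Gaussian integral by diagonalising the $2 \times 2$ quadratic form (of determinant $1-4t^2$); or, more elementarily, to verify that both sides satisfy the same first-order linear PDE in $x$ (using $H_n' = 2nH_{n-1}$ and the three-term recurrence) with the matching boundary condition at $t=0$.
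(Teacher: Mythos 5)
Your proposal is correct --- I verified the key algebra: the exponent of the Gaussian integral is $-(1-4t^2)u^2 + 4it(z-2tx)u + 4tzx - 4t^2x^2$, the formula $\int e^{-au^2+bu}\,\mathrm{d}u = \sqrt{\pi/a}\,e^{b^2/(4a)}$ yields total exponent $\bigl(4txz - 4t^2(x^2+z^2)\bigr)/(1-4t^2)$, and this does equal the stated $x^2 - (x-2zt)^2/(1-4t^2)$. However, there is nothing in the paper to compare it against: the paper offers no proof of this proposition at all, quoting it verbatim as Mehler's classical bilinear generating formula from \citet{Temme1996} (Ex.\ 6.8, p.\ 167) and using it only as an ingredient in the derivation of the density $\pi_{\text{B}}$ for the running example. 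Your argument --- the integral representation $H_n(x) = \tfrac{2^n}{\sqrt{\pi}}\int (x+iu)^n e^{-u^2}\,\mathrm{d}u$, interchange of sum and integral, and a single Gaussian integral --- is one of the standard textbook derivations, and the two analytic points you flag are exactly the right ones: the domination argument genuinely requires $|t|<1/2$ (Cram\'er's bound $|H_n(z)| \le C\, 2^{n/2}\sqrt{n!}\,e^{z^2/2}$ gives an integrable dominating function precisely in that range), and the purely imaginary linear term is handled by contour shift or continuation in $b$. The restriction $|t|<1/2$ is also harmless for the paper's purposes, since the formula is applied there at $t=1/4$ (in the large-$\lambda$ approximation of Prop.\ \ref{Bach exp deriv}). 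In short: where the paper black-boxes a classical identity, you supply a correct, self-contained proof; the only thing the citation buys that your proof does not is brevity.
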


\begin{proposition} \label{Bach exp deriv}
For the example in Fig. \ref{fig:sample sensitive} we have
\begin{eqnarray*}
\pi_{\text{B}}(x ; \lambda) \; \propto \; \hspace{155pt} \\
\hspace{10pt} \exp(-x^2) \sum_{j = 0}^\infty \frac{1}{1 + \lambda 2^{j+1}} \frac{1}{2^j j!} H_j^2 \Big(\sqrt{\frac{3}{2}} x \Big)  .
\end{eqnarray*}
\end{proposition}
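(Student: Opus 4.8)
The plan is to obtain $\pi_{\text{B}}$ by direct substitution of the explicit Mercer decomposition from Proposition \ref{Shi prop} into the defining formula $\pi_{\text{B}}(x;\lambda) \propto \sum_{j} \frac{\mu_j}{\mu_j + \lambda} e_j^2(x)$, followed by algebraic simplification. First I would read off the parameters of the example of Fig.~\ref{fig:sample sensitive}: here $\Pi = \mathrm{N}(0,1)$, so $\mu = 0$ and $\sigma^2 = 1$, while $k(x,x') = \exp(-(x-x')^2)$ corresponds to $\ell = 1$. Hence the auxiliary constant of Proposition \ref{Shi prop} is $\beta = 4\sigma^2/\ell^2 = 4$, which gives the convenient values $1 + 2\beta = 9$, $\sqrt{1+2\beta} = 3$ and $1 + \beta + \sqrt{1+2\beta} = 8$.

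Second, I would evaluate the eigenvalues. Substituting $\beta = 4$ into the formula for $\mu_j$ gives $\sqrt{2/8} = 1/2$ for the prefactor and $(4/8)^j = 2^{-j}$ for the geometric factor, so $\mu_j = 2^{-(j+1)}$. This clean form is what makes the final expression tractable. The corresponding regularisation weight then simplifies, after multiplying numerator and denominator by $2^{j+1}$, to
$$
\frac{\mu_j}{\mu_j + \lambda} = \frac{2^{-(j+1)}}{2^{-(j+1)} + \lambda} = \frac{1}{1 + \lambda 2^{j+1}},
$$
which already reproduces the $\lambda$-dependent factor appearing in the claim.

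Third, I would square the eigenfunctions. With $\mu = 0$, $\sigma = 1$, the Gaussian factor $\exp\bigl(-\tfrac{x^2}{2}\cdot\tfrac{\sqrt{1+2\beta}-1}{2}\bigr)$ becomes $\exp(-x^2/2)$ since $(\sqrt{9}-1)/2 = 1$, and squaring yields $\exp(-x^2)$. The Hermite argument $(\tfrac14 + \tfrac{\beta}{2})^{1/4} = (9/4)^{1/4} = \sqrt{3/2}$, so $e_j^2(x)$ contributes $H_j^2(\sqrt{3/2}\,x)$. The remaining prefactors are $(1+2\beta)^{1/4}/(2^j j!) = 9^{1/4}/(2^j j!)$; the constant $9^{1/4}$ is independent of both $x$ and $j$ and is therefore absorbed into the proportionality symbol. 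Assembling these pieces, factoring the common $\exp(-x^2)$ outside the sum and relabelling the Mercer index to start at $j=0$ (as in Proposition \ref{Shi prop}), delivers exactly the stated formula.

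There is no deep obstacle: the result is a careful substitution. The points demanding most care are the evaluation of the fractional powers — in particular verifying $(9/4)^{1/4} = \sqrt{3/2}$ and that the Gaussian exponent collapses to $1$ — and the bookkeeping of which constants are $x$-free (hence absorbable into $\propto$) versus $x$-dependent. I would also note, in support of the subsequent remark that $\Pi_{\text{B}}$ admits no closed form, that one cannot collapse the residual series: setting $\lambda \to 0$ would formally invite the bilinear generating identity of Proposition \ref{generating lemma} with $t = 1/2$, but there $1 - 4t^2 = 0$ and the right-hand side diverges, so the regularising factor $(1 + \lambda 2^{j+1})^{-1}$ is precisely what obstructs a summation in closed form.
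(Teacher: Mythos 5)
Your proposal is correct and takes essentially the same route as the paper's own proof: direct substitution of the eigenvalues and eigenfunctions from Proposition \ref{Shi prop} (with $\mu=0$, $\sigma=1$, $\ell=1$, $\beta=4$) into $\pi_{\text{B}}(x;\lambda) \propto \sum_j \tfrac{\mu_j}{\mu_j+\lambda}e_j^2(x)$, yielding $\mu_j = (1/2)^{j+1}$ and $e_j^2(x) = \sqrt{3}\,\exp(-x^2)\tfrac{1}{2^j j!}H_j^2\bigl(\sqrt{3/2}\,x\bigr)$, exactly as in the paper. Your closing observation about why the $\lambda$-dependent factor obstructs summation via Proposition \ref{generating lemma} is a nice supplement but goes beyond what the paper's proof contains.
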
 
\begin{proof}
For the example of Fig. \ref{fig:sample sensitive}, in the notation of Prop. \ref{Shi prop}, we have $\mu = 0$, $\sigma = 1$, $\ell = 1$ and $\beta = 4$.
Thus 
\begin{eqnarray*}
\mu_j & = & \Big(\frac{1}{2}\Big)^{j+1}  \\
e_j(x)^2 & = & \sqrt{3} \exp(-x^2) \frac{1}{2^j j!} H_j^2 \Big(\sqrt{\frac{3}{2}} x \Big) 
\end{eqnarray*}
and so
\begin{eqnarray*}
\pi_{\text{B}}(x ; \lambda) \; \propto \; \sum_{j} \frac{\mu_j}{\mu_j + \lambda} e_j^2(x) \hspace{100pt} \\
\hspace{10pt} \propto \; \exp(-x^2) \sum_{j = 0}^\infty \frac{1}{1 + \lambda 2^{j+1}} \frac{1}{2^j j!} H_j^2 \Big(\sqrt{\frac{3}{2}} x \Big) 
\end{eqnarray*}
as required.
\end{proof}

To the best of our knowledge, the expression for $\Pi_{\text{B}}$ in Prop. \ref{Bach exp deriv} does not admit a closed form.
This poses a practical challenge.
However, some limited insight is available through basic approximations:
\begin{itemize}
\item For large values of $\lambda$ we have $1 + \lambda 2^{j+1} \approx \lambda 2^{j+1}$ for all $j \in \{0,1,2,\dots\}$, from which we obtain
\begin{eqnarray*}
\pi_{\text{B}}(x ; \lambda) & \appropto & \exp(-x^2) \sum_{j = 0}^\infty \frac{1}{4^j j!} H_j^2 \Big(\sqrt{\frac{3}{2}} x \Big) \\
& \propto & \exp(-x^2) \exp(x^2) \quad = \quad 1,
\end{eqnarray*}
where the second step made use of Prop. \ref{generating lemma}.
Thus when large integration errors are tolerated, $\Pi_{\text{B}}$ requires that we take the states $\bm{x}_i$ to be approximately uniform over $\mathcal{X}$ (of course, this limiting distribution is improper and serves only for illustration).
\item For small values of $\lambda$, the series in Prop. \ref{Bach exp deriv} is dominated by the first $m$ terms such that $j < m$ if and only if $\lambda 2^{j+1} < 1$.
Indeed, for $j \leq m$ we have $1 + \lambda 2^{j+1} \approx 1$.
Thus we have a computable approximation
\begin{eqnarray*}
\pi_{\text{B}}(x ; \lambda) & \appropto & \exp(-x^2) \sum_{j = 0}^m \frac{1}{2^j j!} H_j^2 \Big(\sqrt{\frac{3}{2}} x \Big) 
\end{eqnarray*}
where $m = \lceil - \log_2(\lambda) \rceil$.
Empirical results (not shown) indicate that this is not a useful approximation from a practical standpoint, since at finite $m$ the tails of the approximation are explosive (due to the use of a polynomial basis).
\end{itemize}
The approximation method in \citet{Bach2015} was also used to obtain the numerical approximation to $\Pi_{\text{B}}$ shown in Fig. \ref{fig:importance of regularisation}.
This appears to support the intuition that it is beneficial to over-sample from the tails of $\Pi$.

To finish, we remark that Prop. \ref{Bach exp deriv} implies that the integration error in this example scales as
$$
\sqrt{\mu_n} \sim 2^{-n/2}
$$
as $n \rightarrow \infty$ when samples are drawn from $\Pi_{\text{B}}$.
This agrees with both intuition and empirical results that concern approximation with exponentiated quadratic kernels.

\begin{figure}[t!]
\begin{center}
\includegraphics[width = 0.9\columnwidth]{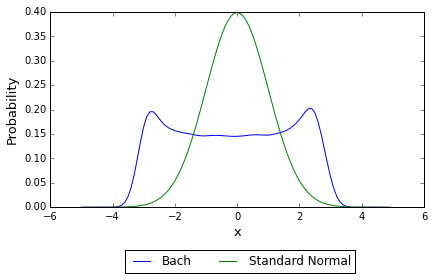}
\end{center}
\caption{Numerical approximation of $\Pi_{\text{B}}$ for the running illustration. Here the regularisation parameter was $\lambda=10^{-15}$.}
\label{fig:importance of regularisation}
\end{figure}

\subsubsection{Additional Theoretical Material}

As mentioned in the Main Text, the worst-case error $e_n(\{\bm{x}_j\}_{j=1}^n)$ can be computed in closed form:
$$
e_n(\{\bm{x}_j\}_{j=1}^n)^2 = \Pi \otimes \Pi(k) - 2 \bm{w}^\top \mathbf{K} \bm{z} + \bm{w}^\top \mathbf{K} \bm{w}
$$
Here we have defined
$$
\Pi \otimes \Pi(k) = \iint_{\mathcal{X} \times \mathcal{X}} k(\bm{x},\bm{x}') \; \Pi \otimes \Pi (\mathrm{d}\bm{x} \times \mathrm{d}\bm{x}') 
$$
where $\Pi \otimes \Pi$ is the product measure of $\Pi$ with itself.

Next, we report a result which does not address KQ itself, but considers importance sampling methods for integration of functions in a Hilbert space.
The following is due to \citet{Plaskota2009,Hinrichs2010} and we provide an elementary proof of their result:

\begin{theorem}
The assumptions of Sec. \ref{subsec:estres} are taken to hold. 
In addition, we assume that distributions $\Pi,\Pi'$ admit densities $\pi,\pi'$.
Introduce importance sampling estimators of the form
$$
\hat{\Pi}_{\mathrm{IS}}(f) = \frac{1}{n} \sum_{i=1}^n f(\bm{x}_i) \frac{\pi(\bm{x}_i)}{\pi'(\bm{x}_i)} ,
$$
where $\bm{x}_1,\dots,\bm{x}_n \sim \Pi'$ are independent, and consider the distribution $\Pi'$ that minimises 
$$
\sup_{f \in \mathcal{F}} \sqrt{\mathbb{E}[ \hat{\Pi}_{\mathrm{IS}}(f) - \Pi(f) ]^2}.
$$
For $\mathcal{F} = \{f\}$ we have that $\Pi'$ is $\pi'(\bm{x}) \propto |f(\bm{x})| \pi(\bm{x})$, while for $\mathcal{F} = \{f \in \mathcal{H} \; : \; \|f\|_{\mathcal{H}} \leq 1 \}$ we have that $\Pi'$ is $\pi'(\bm{x}) \propto \sqrt{k(\bm{x},\bm{x})}\pi(\bm{x})$.
\end{theorem}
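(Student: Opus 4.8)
The plan is to exploit the fact that the importance sampling estimator is unbiased. Since $\mathbb{E}_{\Pi'}[f(\bm{x})\pi(\bm{x})/\pi'(\bm{x})] = \int_{\mathcal{X}} f\,\pi\,\mathrm{d}\bm{x} = \Pi(f)$, the mean square error coincides with the variance, and because the $\bm{x}_i$ are i.i.d.\ this reduces to $\tfrac{1}{n}$ times a single-draw variance. First I would therefore write
$$
\mathbb{E}[\hat{\Pi}_{\mathrm{IS}}(f) - \Pi(f)]^2 = \frac{1}{n}\left(\int_{\mathcal{X}}\frac{f(\bm{x})^2\pi(\bm{x})^2}{\pi'(\bm{x})}\,\mathrm{d}\bm{x} - \Pi(f)^2\right),
$$
which isolates the only $\pi'$-dependent term, the second moment $M(f;\pi') := \int_{\mathcal{X}} f^2\pi^2/\pi'\,\mathrm{d}\bm{x}$, subject to the normalisation $\int_{\mathcal{X}}\pi'\,\mathrm{d}\bm{x} = 1$.

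For $\mathcal{F} = \{f\}$ the term $\Pi(f)^2$ is constant in $\pi'$, so it suffices to minimise $M(f;\pi')$. This is the classical calculation behind \citet[Thm. 3.3.4]{Robert2013}: by Cauchy--Schwarz,
$$
\left(\int_{\mathcal{X}}|f|\pi\,\mathrm{d}\bm{x}\right)^2 = \left(\int_{\mathcal{X}}\frac{|f|\pi}{\sqrt{\pi'}}\sqrt{\pi'}\,\mathrm{d}\bm{x}\right)^2 \leq M(f;\pi')\int_{\mathcal{X}}\pi'\,\mathrm{d}\bm{x} = M(f;\pi'),
$$
with equality precisely when $|f|\pi/\sqrt{\pi'}\propto\sqrt{\pi'}$, i.e.\ $\pi'(\bm{x})\propto|f(\bm{x})|\pi(\bm{x})$, giving the first claim.

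For the unit ball $\mathcal{F} = \{\|f\|_{\mathcal{H}}\leq 1\}$ the extra ingredient is the reproducing property $f(\bm{x}) = \langle f, k(\cdot,\bm{x})\rangle_{\mathcal{H}}$, whence $f(\bm{x})^2 \leq \|f\|_{\mathcal{H}}^2\,k(\bm{x},\bm{x}) \leq k(\bm{x},\bm{x})$ uniformly over the ball, with the bound attained pointwise by the normalised representer $k(\cdot,\bm{x})/\sqrt{k(\bm{x},\bm{x})}$. Substituting yields $\sup_{\|f\|_{\mathcal{H}}\leq 1}M(f;\pi')\leq\int_{\mathcal{X}} k(\bm{x},\bm{x})\pi^2/\pi'\,\mathrm{d}\bm{x}$, a quantity free of $f$. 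I would then re-run the Cauchy--Schwarz step with $\sqrt{k(\bm{x},\bm{x})}$ in the role of $|f|$,
$$
\left(\int_{\mathcal{X}}\sqrt{k(\bm{x},\bm{x})}\,\pi\,\mathrm{d}\bm{x}\right)^2 \leq \int_{\mathcal{X}}\frac{k(\bm{x},\bm{x})\pi^2}{\pi'}\,\mathrm{d}\bm{x},
$$
whose equality case is exactly $\pi'(\bm{x})\propto\sqrt{k(\bm{x},\bm{x})}\,\pi(\bm{x})$, delivering the second claim.

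The main obstacle is handling the worst case over the ball rigorously rather than through this convenient upper bound. The exact supremum over $f$ of $M(f;\pi')$ is the operator norm of $T_{\pi'} := \int_{\mathcal{X}} k(\cdot,\bm{x})\otimes k(\cdot,\bm{x})\,\pi^2/\pi'\,\mathrm{d}\bm{x}$, whereas $\int_{\mathcal{X}} k(\bm{x},\bm{x})\pi^2/\pi'\,\mathrm{d}\bm{x}$ is its trace; the two coincide only up to the usual spectral gap, and the subtracted term $\Pi(f)^2 = \langle f,\mu\rangle_{\mathcal{H}}^2$ (with $\mu$ the kernel mean of $\Pi$) must also be accounted for. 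The cleanest route, and the one I would take, is to optimise the trace bound, which is pointwise tight and yields the stated density; the delicate point is to justify that this density is the intended minimiser of the worst-case criterion and not merely of its second-moment surrogate.
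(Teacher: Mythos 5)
Your proposal is correct and follows essentially the same route as the paper: unbiasedness reduces the MSE to the single-draw second moment, the reproducing-property bound $f(\bm{x})^2 \leq k(\bm{x},\bm{x})$ (attained pointwise by the normalised representer) converts the worst case over the ball into the trace-type quantity $\int_{\mathcal{X}} k(\bm{x},\bm{x})\pi^2/\pi'\,\mathrm{d}\bm{x}$, and your Cauchy--Schwarz step with its equality condition is exactly the paper's Jensen step, yielding the same optimal density. The ``delicate point'' you flag at the end is genuine but is not resolved in the paper either: there the supremum--integral interchange is simply asserted to be an equality ``as can be seen from $f(\bm{x}) = \sqrt{k(\bm{x},\bm{x})}$'' (a function that need not lie in the unit ball, or even in $\mathcal{H}$), and the subtracted $\Pi(f)^2$ term is likewise dropped without comment, so the paper is in effect also optimising the second-moment surrogate that you describe.
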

\begin{proof}
The first result, for $\mathcal{F} = \{f\}$ is well-known; e.g. Thm. 3.3.4 in \citet{Robert2013}.

For the second case, where $\mathcal{F}$ is the unit ball in $\mathcal{H}$, we start by establishing a (tight) upper bound for the supremum of $f^2$ over $f \in \mathcal{F}$:
\begin{eqnarray*}
|f(\bm{x})| 
& = &
\big| \langle f, k(\cdot,\bm{x}) \rangle_{\mathcal{H}} \big| \\
& \leq &
\|f\|_\mathcal{H} \|k(\cdot,\bm{x})\|_\mathcal{H} \\
& = &
\|f\|_\mathcal{H} \sqrt{\langle k(\cdot,\bm{x}), k(\cdot,\bm{x}) \rangle_{\mathcal{H}}} \\
& = &
\|f\|_\mathcal{H} \sqrt{k(\bm{x},\bm{x})}
\end{eqnarray*}
where the inequality here is Cauchy-Schwarz. 
Squaring both sides and taking the supremum over $f \in \mathcal{F}$ gives
\begin{equation}
\sup_{f \in \mathcal{F}} f(\bm{x})^2
\; \leq \; 
\sup_{f \in \mathcal{F}}\|f\|_\mathcal{H}^2 \;
k(\bm{x},\bm{x})
\; = \;
k(\bm{x},\bm{x}). \label{rkhs bound}
\end{equation}
This is in fact an equality, since for given $\bm{x} \in \mathcal{X}$ we can take $f(\bm{x}') = k(\bm{x}',\bm{x}) / \sqrt{k(\bm{x},\bm{x})}$ which has $\|f\|_{\mathcal{H}} = 1$ and $f(\bm{x})^2 = k(\bm{x},\bm{x})$.

Our objective is expressed as
\begin{eqnarray*}
\sup_{f \in \mathcal{F}} \sqrt{\mathbb{E}[ \hat{\Pi}_{\text{IS}}(f) - \Pi(f) ]^2} = \sup_{f \in \mathcal{F}} \frac{1}{\sqrt{n}} \; \text{Std}\Big(\frac{f\pi}{\pi'} ; \Pi'\Big) 
\end{eqnarray*}
and since
$$
\text{Std}\Big(\frac{f\pi}{\pi'} ; \Pi'\Big)^2 \; = \; \Pi'\Big(\Big(\frac{f\pi}{\pi'}\Big)^2\Big) - \Pi'\Big(\frac{f\pi}{\pi'}\Big)^2
$$
we thus aim to minimise
\begin{equation*} 
\sup_{f \in \mathcal{F}} \; \Pi'\Big(\Big(\frac{f\pi}{\pi'}\Big)^2\Big) \label{fx objective}
\end{equation*}
over $\Pi' \in \mathcal{P}(\mathcal{F} \cdot \mathrm{d}\Pi / \mathrm{d}\Pi')$.
(Here $\mathcal{F} \cdot \mathrm{d}\Pi / \mathrm{d}\Pi'$ denotes the set of functions of the form $f \cdot \mathrm{d}\Pi / \mathrm{d}\Pi'$ such that $f \in \mathcal{F}$.)

Combining Eqns. \ref{rkhs bound} and \ref{fx objective}, we have
\begin{eqnarray*}
\sup_{f \in \mathcal{F}} \Pi'\Big(\Big(\frac{f\pi}{\pi'}\Big)^2\Big) 
& \leq & \Pi'\Big( \sup_{f \in \mathcal{F}} \Big(\frac{f \pi}{\pi'}\Big)^2\Big) \\
& = & \Pi'\Big( k(\cdot,\cdot)\Big(\frac{\pi(\cdot)}{\pi'(\cdot)}\Big)^2\Big)
\end{eqnarray*}
As before, this is in fact an equality, as can be seen from $f(\bm{x}) = \sqrt{k(\bm{x},\bm{x})}$.

From Jensen's inequality,
\begin{eqnarray}
\Pi'\Big( k(\cdot,\cdot)\Big(\frac{\pi(\cdot)}{\pi'(\cdot)}\Big)^2\Big)
& \geq & \Big(\Pi'\Big(\sqrt{k(\cdot,\cdot)} \frac{\pi(\cdot)}{\pi'(\cdot)}\Big)\Big)^2 \label{Jensen} \\
& = & \big(\Pi\big(\sqrt{k(\cdot,\cdot)}\big)\big)^2. \nonumber
\end{eqnarray}
Since the right hand side is independent of $\Pi'$, a choice of $\Pi'$ for which Eqn. \ref{Jensen} is an equality must be a minimiser of Eqn. \ref{fx objective}.
It remains just to verify this fact for $\pi'(\bm{x}) =  \sqrt{k(\bm{x},\bm{x})} \pi(\bm{x}) / C$, where the normalising constant is $C=\Pi(\sqrt{k(\cdot,\cdot)})$.
For this choice
\begin{eqnarray*}
\Pi' \Big( k(\cdot,\cdot) \Big( \frac{\pi(\cdot)}{\pi'(\cdot)} \Big)^2 \Big)
& = &
 \Pi'(C^2) \\
& = &
(\Pi(\sqrt{k(\cdot,\cdot)}))^2 
\end{eqnarray*} 
as required.
\end{proof}

\FloatBarrier

\subsection{Implementation of $\texttt{test}(R < R_{\min})$}
\label{appendix:R}
Here we provide details for how the criterion $R < R_{\min}$ was tested.
The problem with the naive approach of comparing $R$ estimated at $t_{i-1}$ directly with $R$ estimated at $t_i$ is that Monte Carlo error can lead to an incorrect impression that $R$ is increasing, when it is in fact decreasing, and cause the algorithm to terminate when estimation is poor (see Fig. \ref{fig:smoother} and note the jaggedness of the estimated $R$ curve as a function of inverse temperature $t$).
Our solution was to apply a least-squares linear smoother to the estimates for $R$ over 5 consecutive temperatures.
This approach, denoted \texttt{test}, illustrated in Fig. \ref{fig:smoother}, determines whether the gradient of the linear smoother is positive or negative, and in this way we are able to provide robustness to Monte Carlo error in the termination criterion.
To be precise, the algorithm requires at least 5 temperature evaluations before termination is considered (Fig. \ref{fig:smoother}; left) and terminates when the gradient of the linear smoother becomes positive for the first time (Fig. \ref{fig:smoother}; right).
The success of this strategy was established in Fig. \ref{fig:res1_hist} later in the Appendix.

\begin{figure*}[t!]
\centering
\includegraphics[width = 0.32\textwidth]{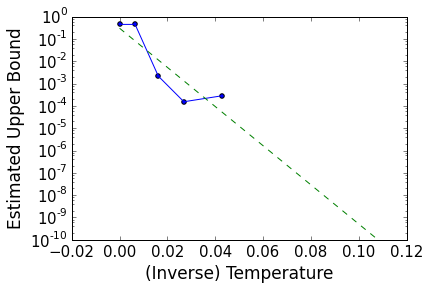}
\includegraphics[width = 0.32\textwidth]{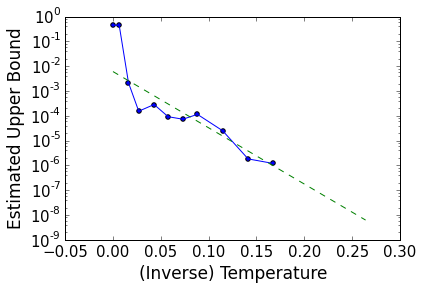}
\includegraphics[width = 0.32\textwidth]{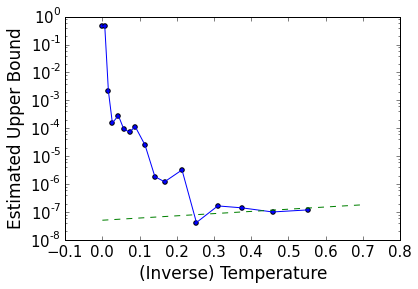}
\caption{Implementation of $\texttt{test}(R < R_{\min})$. A linear smoother (dashed line) was based on 5 consecutive (inverse) temperature parameters $t_{i-4},t_{i-3},t_{i-2},t_{i-1},t_i$.
To begin it is required that 5 temperatures are considered (left panel).
The algorithm terminates on the first occasion when the linear smoother takes a positive gradient (right panel).}
\label{fig:smoother}
\end{figure*}

\subsection{Experimental Results}
\label{appendix:experimental_results}

\subsubsection{Implementation of Simulation Study}

Denote by $\mathrm{N}(\bm{x}|\bm{\mu},\bm{\Sigma})$ the p.d.f. of the multivariate Gaussian distribution with mean $\bm{\mu}$ and covariance $\bm{\Sigma}$. Furthermore, we denote by $\bm{\Sigma}_{\sigma}$ the diagonal covariance matrix with diagonal element $\sigma^2$.
Then elementary manipulation of Gaussian densities produces:
\begin{eqnarray*}
k(\bm{x},\bm{y}) & := & \exp\Big(-\frac{\sum_{j=1}^d \big(x_j-y_j\big)^2}{l^2}\Big) \\
& = & (\sqrt{\pi} l)^d \phi\big(\bm{x}|\bm{y},\Sigma_{l/\sqrt{2}}\big) \\
\nabla_l k(x,y) &:=& \frac{2\sum_{j=1}^d (x_j-y_j)^2}{l^3}k(\bm{x},\bm{y})  \\ 
\Pi[k(\cdot,\bm{x})] & := &(\sqrt{\pi} l)^d \mathrm{N}\big(\bm{x}|\bm{0},\Sigma_{\sigma}+\Sigma_{l/\sqrt{2}}\big) \\
\Pi \otimes \Pi(k) & := &(\sqrt{\pi} l)^d \mathrm{N}\big(\bm{0}|\bm{0},\Sigma_{\sqrt{2}\sigma}+\Sigma_{l/\sqrt{2}}\big)
\end{eqnarray*}

\subsubsection{Dependence on Parameters for the Simulation Study}

For the running illustration with $f(x) = 1 + \sin(x)$, $\Pi = \text{N}(0,1)$, $\Pi' = \text{N}(0,\sigma^2)$ and $k(x,x') = \exp(-(x-x')^2 / \ell^2)$,
we explored how the RMSE of KQ depends on the choice of both $\sigma$ and $\ell$.
Here we go beyond the results presented in Fig. \ref{fig:kernel sensitive}, which considered fixed $n$, to now consider the simultaneous choice of both $\sigma, \ell$ for varying $n$.
Note that in these numerical experiments the kernel matrix inverse $\mathbf{K}^{-1}$ was replaced with the regularised inverse $(\mathbf{K} + \lambda \mathbf{I})^{-1}$ that introduces a small `nugget' term $\lambda > 0$ for stabilisation.
Results, shown in Fig. \ref{fig2 ctd}, demonstrate two principles that guided the methodological development in this paper:
\begin{itemize}
\item Length scales $\ell$ that are `too small' to learn from $n$ samples do not permit good approximations $\hat{f}$ and lead in practice to high RMSE. 
At the same time, if $\ell$ is taken to be `too large' then efficient approximation at size $n$ will also be sacrificed.
This is of course well understood from a theoretical perspective and is borne out in our empirical results.
These results motivated extension of \texttt{SMC-KQ} to \texttt{SMC-KQ-KL}.

\item In general the `sweet spot', where $\sigma$ and $\ell$ lead to minimal RMSE, is quite small. However, the problem of optimal choice for $\sigma$ and $\ell$ does not seem to become more or less difficult as $n$ increases. 
This suggests that a method for selection of $\sigma$ (and possibly also of $\ell$) ought to be effective regardless of the number $n$ of states that will be used.
\end{itemize}

\begin{figure}[t!]
\begin{center}
\includegraphics[width = \columnwidth]{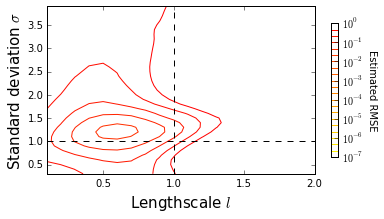}
\end{center}
\begin{center}
\includegraphics[width = \columnwidth]{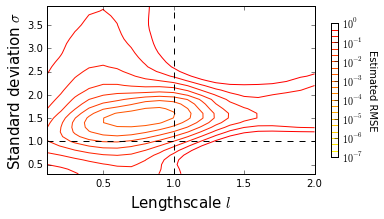}
\end{center}
\begin{center}
\includegraphics[width = \columnwidth]{figures/l_vs_sig_RMSE_n75_rep300.png}
\end{center}
\caption{Example of Fig. \ref{fig:kernel sensitive}, continued.
Here we consider the simultaneous choice of sampling standard deviation $\sigma$ and kernel length-scale $\ell$, reporting empirical estimates for the estimated root mean square integration error (over $M=300$ repetitions) in each case for sample size (a) $n = 25$ (top), (b) $n = 50$ (middle) and (c) $n = 75$ (bottom).}
\label{fig2 ctd}
\end{figure}

\subsubsection{Additional Results for the Simulation Study}

To understand whether the termination criterion of Sec. \ref{subsec:stopping} was suitable (and, by extension, to examine the validity of the convexity ansatz in Sec. \ref{subsec:SMC}), in Fig. \ref{fig:res1_hist} we presented histograms for both estimated and actual optimal (inverse) temperature parameter $t^*$.
Results supported the use of the criterion, in the form described above for $\texttt{test}$.

In Fig. \ref{fig:comparison_startingdist} reports the dependence of performance on the choice of initial distribution $\Pi_0$. 
There was relatively little influence on the RMSE obtained by the method for this wide range of initial distribution, which supports the purported robustness of the method. 

We also test the method on more complex integrands in Fig. \ref{fig:other_f}: $f(x)=1 + \sin(4\pi x)$ and $f(x)=1 + \sin(8\pi x)$. These are more challenging for KQ compared to the illustration in the Main Text, since they are more difficult to interpolate due to their higher periodicity. However, $\texttt{SMC-KQ}$ still manages to adapt to the complexity of the integrand and performs as well as the best importance sampling distribution ($\sigma=2$). 

As an extension, we also study the robustness to the dimensionality to the problem. In problem, we consider the generalisation of our main test function to $f:\mathbb{R}^d \rightarrow \mathbb{R}$ given by $f(\bm{x})=1 + \prod_{j=1}^d \sin(2\pi x_j)$. Notice that the integral can still be computed analytically and equals $1$. We present results for $d=2$ and $d=3$ in Fig. \ref{fig:larger_d}. 
These two cases are more challenging for both the $\texttt{KQ}$ and $\texttt{SMC-KQ}$ methods, since the higher dimension implies a slower convergence rate. Once again, we notice that $\texttt{SMC-KQ}$ manages to adapt to the complexity of the problem at hand, and provides improved performance on simpler sampling distributions.

Finally, we considered replacing the independent samples $x_j \sim \Pi$ with samples drawn from a quasi-random point sequence.
Fig. \ref{fig:KQ-Halton} reports results where draws from $\mathrm{N}(0,1)$ were produced based on a Halton quasi-random number generator. In this case, the performance is improved by up to 10 orders of magnitude in MSE when the sampling is done with respect to a range of tempered sampling distribution (here $\mathrm{N}(0,3^2)$). 
This suggests that a SQMC approach \citep{Gerber2015} could provide further improvement and this suggested for future work.

\begin{figure}[t!]
\includegraphics[width = 0.49\columnwidth]{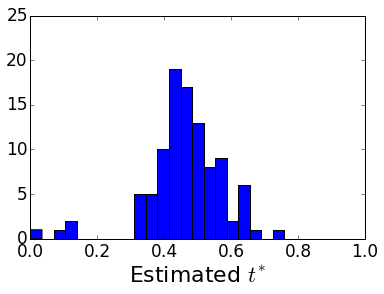}
\includegraphics[width = 0.49\columnwidth]{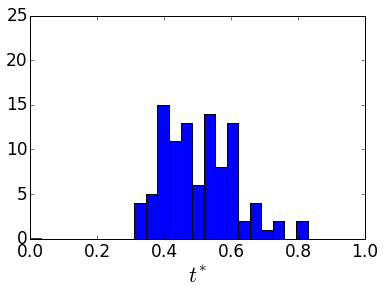}
\caption{Histograms for the optimal (inverse) temperature parameter $t^*$.
Left: Estimate of $t^*$ provided under the termination criterion of Sec. \ref{subsec:stopping}.
Right: Estimate of $t^*$ obtained by estimating $R$ over a grid for $t \in [0,1]$ and returning the global minimum.
The similarity of these histograms is supportive of the convexity ansatz in Sec. \ref{subsec:SMC}.}
\label{fig:res1_hist}
\end{figure}

\begin{figure}[t!]
\begin{center}
\includegraphics[width = \columnwidth]{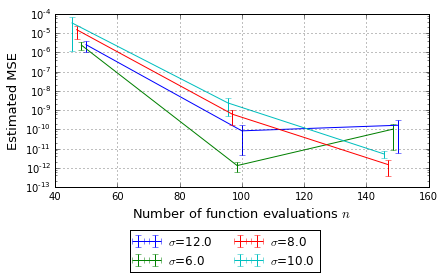}
\end{center}
\caption{Comparison of the performance of \texttt{SMC-KQ} on the running illustration of Figs. \ref{fig:sample sensitive} and \ref{fig:kernel sensitive} for varying initial distribution $\Pi_0 = \mathrm{N}(0,\sigma^2)$. }
\label{fig:comparison_startingdist}
\end{figure}

\begin{figure}[t!]
\includegraphics[width = \columnwidth]{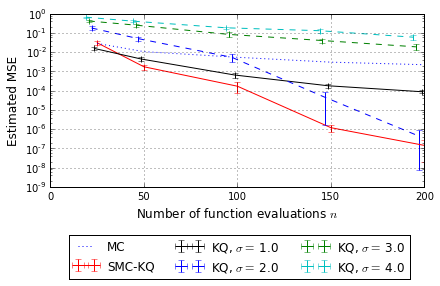}
\includegraphics[width = \columnwidth]{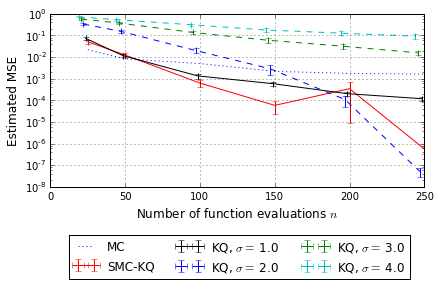}
\caption{Performance of \texttt{KQ} and \texttt{SMC-KQ} on the integration problem with $f(x)=1+\sin(4\pi x)$ (top) and $f(x)=1+\sin(8\pi x)$ (bottom) integrated against $ \mathrm{N}(0,1)$. The SMC sampler was initiated with a $\mathrm{N}(0,8^2)$ distribution. 
The kernel used was Gaussian with length scales $\ell=0.25$ (top) and $\ell=0.15$ (bottom) each chosen to reflect the complexity of the functions.}
\label{fig:other_f}
\end{figure}

\begin{figure}[t!]
\includegraphics[width = \columnwidth]{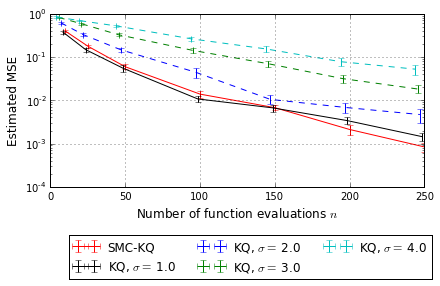}
\vspace{3mm}
\includegraphics[width = \columnwidth]{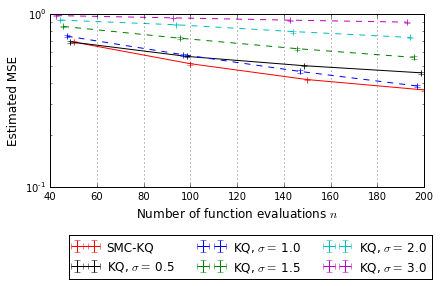}
\vspace{3mm}
\includegraphics[width = \columnwidth]{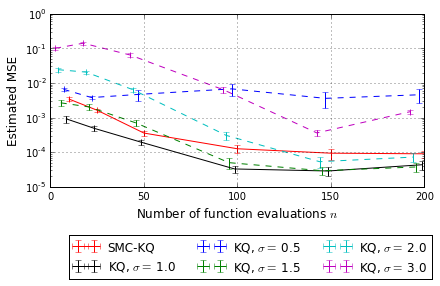}
\vspace{3mm}
\caption{Performance of \texttt{KQ} and \texttt{SMC-KQ} on the integration problem with $f(\bm{x})=1 + \prod_{j=1}^d \sin(2\pi x_j)$ integrated against a $\mathrm{N}(\bm{0},\mathbf{I})$ distribution for $d=2$ (top), $d=3$ (middle) and $d=10$ (bottom). The SMC sampler was initiated with a $\mathrm{N}(\bm{0},8^2\mathbf{I})$ distribution. 
The kernel used was a (multivariate) Gaussian kernel $k(\bm{x},\bm{y})=\exp(-\sum_{j=1}^d (x_j - y_j)^2/\ell_j^2)$ with the length scales $\ell_1= \dots = \ell_d = 0.25$ were used. 
}
\label{fig:larger_d}
\end{figure}

\begin{figure}[t!]
\begin{center}
\includegraphics[width = \columnwidth]{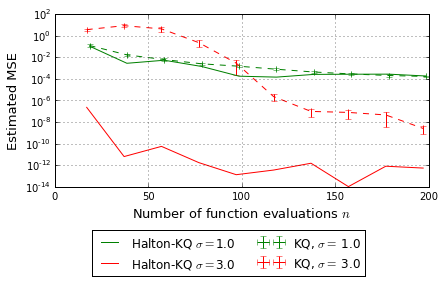}
\end{center}
\caption{Comparison between KQ with $x_j \sim \mathrm{N}(0,1)$ independent and KQ with $x_j = \Phi^{-1}(u_j)$ where the $\{u_j\}_{j=1}^n$ are the first $n$ terms in the Halton sequence and $\Phi$ is the standard Gaussian cumulative density function.}
\label{fig:KQ-Halton}
\end{figure}

\newpage

\subsubsection{Implementation of Stein's Method}

Following \citet{Oates2017} we considered the Stein operator 
$$
\mathbb{S}[f](\bm{\theta}) :=  [\nabla_{\bm{\theta}} + \nabla \log \pi(\bm{\theta})][f](\bm{\theta})
$$ 
and denote the score function by $u_j(\bm{\theta})=\nabla_{\theta_j}\log\pi(\bm{\theta})$. 
Here $\pi$ is the p.d.f. for $\Pi$.
Applying the Stein operator to each argument of a base kernel $k_b$, and adding a constant, gives produces the new kernel:
\begin{eqnarray*}
k(\bm{\theta},\bm{\phi}) 
& := & 1 + \sum_{j=1}^d \begin{array}{l} [ \nabla_{\theta_j} \nabla_{\phi_j} k_b(\bm{\theta},\bm{\phi}) \\
\hspace{20pt} + u_j(\bm{\theta}) \nabla_{\phi_j} k_b(\bm{\theta},\bm{\phi}) \\
\hspace{20pt} + u_j(\bm{\phi}) \nabla_{\theta_j} k_b(\bm{\theta},\bm{\phi})  \\
\hspace{20pt} + u_j(\bm{\theta}) u_j(\bm{\phi}) k_b(\bm{\theta},\bm{\phi}) ]
\end{array}
\end{eqnarray*}
which we will use for our KQ estimator. Using integration by parts, we can easily check that $\Pi[k(\cdot,\bm{\theta})] = 1$ and $\Pi \otimes \Pi(k) = 1$. 
In this experiment, the base kernel was taken to be Gaussian: $k_b(\bm{\theta},\bm{\phi})=\exp(-\sum_{j=1}^d (\theta_j-\phi_j)^2/\ell_j^2)$.
We obtained the derivatives:
\begin{eqnarray*}
\frac{\mathrm{d}k(\bm{\theta},\bm{\phi})}{\mathrm{d}\theta_j}  & = & -\frac{2}{\ell_j^2} (\theta_j - \phi_j) k(\bm{\theta},\bm{\phi}) \\
\frac{\mathrm{d}k(\bm{\theta},\bm{\phi})}{\mathrm{d}\phi_j}  & = &  \frac{2}{\ell_j^2} (\theta_j - \phi_j) k(\bm{\theta},\bm{\phi}) \\
\frac{\mathrm{d} k(\bm{\theta},\bm{\phi})}{\mathrm{d} \theta_j \mathrm{d} \phi_j}  & = &  \frac{\big( 2\ell_j^2 - 4 (\theta_j - \phi_j)^2\big)}{\ell_j^4} k(\bm{\theta},\bm{\phi})  \\
\end{eqnarray*}
Furthermore, we can obtain expressions for the score function for posterior densities as follows:
\begin{eqnarray*}
u_j(\bm{\theta}) 
& = & \frac{\mathrm{d}}{\mathrm{d}\theta_j} \log\pi(\bm{\theta}) + \frac{\mathrm{d}}{\mathrm{d}\theta_j} \log\pi(\bm{y}|\bm{\theta}).
\end{eqnarray*}

\subsection{Algorithms and Implementation} \label{appendix:algorithms_pseudocode}

\subsubsection{SMC Sampler}

In Alg. \ref{SMC} the standard SMC scheme is presented.
Re-sampling occurs when the effective sample size, $\|\bm{w}\|_2^{-2}$ drops below a fraction $\rho$ of the total number $N$ of particles.
In this work we took $\rho = 0.95$ which is a common default.

\begin{algorithm}[h!]
\caption{Sequential Monte Carlo Iteration} 
\label{SMC}
\begin{algorithmic}
\STATE {\bf function} $\texttt{SMC}(\{(w_j,\bm{x}_j)\}_{j=1}^N , t_i , t_{i-1} , \rho)$
\STATE {\bf input} $\{(w_j , \bm{x}_j)\}_{j=1}^N$ (particle approx. to $\Pi_{i-1}$)
\STATE {\bf input} $t_i$ (next inverse-temperature)
\STATE {\bf input} $t_{i-1}$ (previous inverse-temperature)
\STATE {\bf input} $\rho$ (re-sample threshold)
\STATE $w_j' \gets w_j \times [ \pi(\bm{x}_j) / \pi_0(\bm{x}_j) ]^{t_i - t_{i-1}}$ ($\forall j \in 1:N$)
\STATE $\bm{w}' \gets \bm{w}' / \|\bm{w}'\|_1$ (normalise weights)
\IF {$\|\bm{w}'\|_2^{-2} < N \cdot \rho$}
\STATE $\bm{a} \sim \text{Multinom}(\bm{w}')$
\STATE $\bm{x}_j' \gets \bm{x}_{a(j)}$ (re-sample $\forall j \in 1:N$)
\STATE $w_j' \gets N^{-1}$ (reset weights $\forall j \in 1:N$)
\ENDIF
\STATE $\bm{x}_j' \sim \texttt{Markov}(\bm{x}_j'; \Pi_i , \{(w_j,\bm{x}_j)\}_{j=1}^N)$ (Markov update $ \in 1:N$)
\STATE {\bf return} $\{(w_j',\bm{x}_j')\}_{j=1}^N$ (particle approx. to $\Pi_i$)
\end{algorithmic}
\end{algorithm} \FloatBarrier

Denote
\begin{eqnarray*}
q(\bm{x},\cdot;\{(w_j,\bm{x}_j)\}_{j=1}^N) & = & \mathrm{N}(\cdot ; \bm{\mu} , \bm{\Sigma} ) \\
\bm{\mu} & = & \sum_{j=1}^N w_j \bm{x}_j \\
\bm{\Sigma} & = & \sum_{j=1}^N w_j (\bm{x}_j - \bm{\mu}) (\bm{x}_j - \bm{\mu})^\top.
\end{eqnarray*}
The above standard adaptive independence proposal was used within a Metropolis-Hastings Markov transition:

\begin{algorithm}[h!]
\begin{algorithmic}
\caption{Markov Iteration} 
\label{markov}
\STATE {\bf function} $\texttt{Markov}(\bm{x} , \pi, \{(w_j,\bm{x}_j)\}_{j=1}^N)$
\STATE {\bf input} $\bm{x}$ (current state)
\STATE {\bf input} $\pi$ (density of invar. dist.)
\STATE $\bm{x}^* \sim q(\bm{x},\bm{x}^*;\{(w_j,\bm{x}_j)\}_{j=1}^N)$ (propose)
\STATE $$\hspace{-75pt} r \gets \frac{ \pi_i(\bm{x}^*) q(\bm{x}^*,\bm{x}; \{(w_j,\bm{x}_j)\}_{j=1}^N) }{  \pi_i(\bm{x}) q(\bm{x},\bm{x}^*; \{(w_j,\bm{x}_j)\}_{j=1}^N) }$$
\STATE $u \sim \text{Unif}(0,1)$
\IF {$u < r$}
\STATE $\bm{x} \gets \bm{x}^*$ (accept)
\ENDIF 
{\bf return} $\bm{x}$ (next state)
\end{algorithmic}
\end{algorithm} \FloatBarrier

\subsubsection{Choice of Temperature Schedule}

Following \citet{Zhou2016} we employed an adaptive temperature schedule construction.
This was based on the conditional effective sample size of the SMC particle set, estimated as follows:

\begin{algorithm}[h!]
\begin{algorithmic}
\caption{Conditional Effective Sample Size} 
\label{cess}
\STATE {\bf function} $\texttt{CESS}(\{(w_j , \bm{x}_j)\}_{j=1}^N , t)$
\STATE {\bf input} $\{(w_j , \bm{x}_j)\}_{j=1}^N$ (particle approx. $\Pi_{i-1}$)
\STATE {\bf input} $t$ (candidate next inverse-temperature) 
\STATE $z_j \gets [\pi(\bm{x}_j) / \pi_0(\bm{x}_j)]^{t_i - t_{i-1}}$ ($\forall j \in 1:N$)
\STATE $E \gets N \left( \sum_{j=1}^N w_j z_j \right)^2 \big/  \left( \sum_{j=1}^N w_j z_j^2 \right)$ 
\STATE {\bf return} $E$ (est'd. cond. ESS)
\end{algorithmic}
\end{algorithm} \FloatBarrier

The specific construction for the temperature schedule is detailed in Alg. \ref{temp} below and makes use of a Sequential Least Squares Programming algorithm:

\begin{algorithm}[h!]
\caption{Adaptive Temperature Iteration} 
\label{temp}
\begin{algorithmic}
\STATE {\bf function} $\texttt{temp}(\{(w_j , \bm{x}_j)\}_{j=1}^N , t_{i-1} , \rho,\Delta)$
\STATE {\bf input} $\{(w_j , \bm{x}_j)\}_{j=1}^N$ (particle approx. $\Pi_{i-1}$)
\STATE {\bf input} $t_{i-1}$ (current inverse-temperature)
\STATE {\bf input} $\rho$ (re-sample threshold)   
\STATE {\bf input} $\Delta$
(max. grid size, default $\Delta = 0.1$)  
\STATE $t \gets \text{solve}(\texttt{CESS}(\{(w_j , \bm{x}_j)\}_{j=1}^N , t) = N \cdot \rho)$ \\ (binary search in $[t_{i-1},1]$)
\STATE $t_i \gets \min\{t_{i-1} + \Delta , t\}$
{\bf return} $t_i$ (next inverse-temperature)
\end{algorithmic}
\end{algorithm} \FloatBarrier

\subsubsection{Termination Criterion}

For \texttt{SMC-KQ} we estimated an upper bound on the worst case error in the unit ball of the Hilbert space $\mathcal{H}$.
This was computed as follows, using a bootstrap algorithm:

\begin{algorithm}[h!]
\caption{Termination Criterion} 
\label{term}
\begin{algorithmic}
\STATE {\bf function} $\texttt{crit}(\Pi , k , \{\bm{x}_j\}_{j=1}^N)$
\STATE {\bf input} $\Pi$ (target disn.)
\STATE {\bf input} $k$ (kernel)
\STATE {\bf input} $\{\bm{x}_j\}_{j=1}^N$ (collection of states)
\STATE $R^2 \gets 0$
\STATE $e_0 \gets \iint_{\mathcal{X} \times \mathcal{X}} k(\bm{x},\bm{x}') \Pi \otimes \Pi(\mathrm{d}\bm{x} \times \mathrm{d}\bm{x}')$ (in'l error)
\FOR{m = 1,\dots,M} 
\STATE $\tilde{\bm{x}}_j \sim \text{Unif}(\{\bm{x}_j\}_{j=1}^N)$ ($\forall j \in 1:n$)
\STATE $z_j \gets \int_{\mathcal{X}} k(\cdot , \tilde{\bm{x}}_j) \mathrm{d}\Pi$ (k'l mean eval. $\forall j \in 1:n$)
\STATE $\mathrm{K}_{j,j'} \gets k(\tilde{\bm{x}}_j , \tilde{\bm{x}}_{j'})$ (kernel eval. $\forall j,j' \in 1:n$)
\STATE $\bm{w} \gets \bm{z}^T\bm{K}^{-1}$ (KQ weights)
\STATE $e_n^2 \gets \bm{w}^\top \bm{\mathrm{K}} \bm{w} - 2 \bm{w}^\top \bm{z} + e_0^2$
\STATE $R^2 \gets R^2 + e_n^2 M^{-1}$ 
\ENDFOR \\
{\bf return} $R$ (est'd error)
\end{algorithmic}
\end{algorithm} \FloatBarrier
Note that this could be slightly improved using a weighted bootstrap approach.

For \texttt{SMC-KQ-KL} an empirical upper bound on integration error was estimated.
This requires that the norm $\|f\|_{\mathcal{H}}$ be estimated, which was achieved as follows:

\begin{algorithm}[h!]
\caption{Termination Crit. + Kernel Learning} 
\label{term kl}
\begin{algorithmic}
\STATE {\bf function} $\texttt{crit-KL}(f, \Pi , k , \{\bm{x}_j\}_{j=1}^N)$
\STATE {\bf input} $f$ (integrand)
\STATE {\bf input} $\Pi$ (target disn.)
\STATE {\bf input} $k$ (kernel)
\STATE {\bf input} $\{\bm{x}_j\}_{j=1}^N$ (collection of states) 
\STATE $R^2 \gets 0$
\STATE $e_0 \gets \iint_{\mathcal{X} \times \mathcal{X}} k(\bm{x},\bm{x}') \Pi \otimes \Pi(\mathrm{d}\bm{x} \times \mathrm{d}\bm{x}')$ (in'l error)
\FOR{m = 1,\dots,M} 
\STATE $\tilde{\bm{x}}_j \sim \text{Unif}(\{\bm{x}_j\}_{j=1}^N)$ ($\forall j \in 1:n$)
\STATE $\mathrm{f}_j \gets f(\tilde{\bm{x}}_j)$ (function eval. $\forall j \in 1:n$)
\STATE $z_j \gets \int_{\mathcal{X}} k(\cdot , \tilde{\bm{x}}_j) \mathrm{d}\Pi$ (k'l mean eval. $\forall j \in 1:n$)
\STATE $\mathrm{K}_{j,j'} \gets k(\tilde{\bm{x}}_j , \tilde{\bm{x}}_{j'})$ (kernel eval. $\forall j,j' \in 1:n$)
\STATE $\bm{w} \gets \bm{z}^T\bm{K}^{-1}$ (KQ weights)
\STATE $e_n^2 \gets \bm{w}^\top \bm{\mathrm{K}} \bm{w} - 2 \bm{w}^\top \bm{z} + e_0^2$
\STATE $R^2 \gets R^2 + e_n^2 M^{-1}$ 
\ENDFOR
\STATE $z_j \gets \int_{\mathcal{X}} k(\cdot , \bm{x}_j) \mathrm{d}\Pi$ (kernel mean eval. $\forall j \in 1:n$)
\STATE $\mathrm{K}_{j,j'} \gets k(\bm{x}_j ,\bm{x}_{j'})$ (kernel eval. $\forall j,j' \in 1:n$)
\STATE $\bm{w} \gets \bm{z}^T\bm{K}^{-1}$ (KQ weights)
\STATE $S^2 \gets R^2 \times \bm{w}^\top \bm{\mathrm{K}} \bm{w}$ 
{\bf return} $S$ (est'd error bound)
\end{algorithmic}
\end{algorithm} \FloatBarrier

In Alg. \ref{term kl} the literal interpretation, that $f$ is re-evaluated on values of $\bm{x}_j$ which have been previously examined, is clearly inefficient.
In practice such function evaluations were cached and then do not contribute further to the total number of function evaluations that are required in the algorithm.

\subsubsection{Kernel Learning}

A generic approach to select kernel parameters is the maximum marginal likelihood method:

\begin{algorithm}[h!]
\caption{Parameter Update} 
\label{kern-param}
\begin{algorithmic}
\STATE {\bf function} $\texttt{kern-param}(\bm{\mathrm{f}} , \{\bm{x}_j\}_{j=1}^n  , k_\theta)$
\STATE {\bf input} $\bm{\mathrm{f}}$ (integrand evals.)
\STATE {\bf input} $\{\bm{x}_j\}_{j=1}^n$ (associated states)
\STATE {\bf input} $k_\theta$ (parametric kernel) 
\STATE $\theta' \gets \argmin_\theta \bm{\mathrm{f}}^\top \bm{\mathrm{K}}_\theta^{-1} \bm{\mathrm{f}} + \log |\bm{\mathrm{K}}_\theta|$ (numer. opt.)
\STATE (s.t. $\mathrm{K}_{\theta,j,j'} = k_\theta(\bm{x}_j , \bm{x}_{j'})$) 
{\bf return} $\theta'$ (optimal params)
\end{algorithmic}
\end{algorithm} \FloatBarrier

\subsubsection{Implementation of \texttt{SMC-KQ-KL}}

Our final algorithm to present is the full implementation for \texttt{SMC-KQ-KL}:

\begin{algorithm}[t!]
\caption{SMC for KQ with Kernel Learning}
\label{SMCKQKL}
\begin{algorithmic}
\STATE {\bf function} $\texttt{SMC-KQ-KL}(f, \Pi, k_\theta , \Pi_0, \rho , n, N)$
\STATE {\bf input} $f$ (integrand)
\STATE {\bf input} $\Pi$ (target disn.)
\STATE {\bf input} $k_\theta$ (parametric kernel)
\STATE {\bf input} $\Pi_0$ (reference disn.)
\STATE {\bf input} $\rho$ (re-sample threshold)
\STATE {\bf input} $n$ (num. func. evaluations)
\STATE {\bf input} $N$ (num. particles) 
\STATE $i \gets 0$; $t_i \gets 0$; $R_{\min} \gets \infty$
\STATE $\bm{x}_j' \sim \Pi_0$ (initialise states $\forall j \in 1:N$)
\STATE $w_j' \gets N^{-1}$ (initialise weights $\forall j \in 1:N$)
\STATE $\theta' \gets \texttt{kern-param}(f,\{\bm{x}_j'\}_{j=1}^n)$ (kernel params)
\STATE $R \gets \texttt{crit-KL} (f, \Pi , k_{\theta'} , \{\bm{x}_j'\}_{j=1}^N)$ (est'd error)
\WHILE{$\texttt{test}(R < R_{\min})$ and $t_i < 1$}
\STATE $i \gets i + 1$; $R_{\min} \gets R$; $\theta \gets \theta'$
\STATE $\{(w_j , \bm{x}_j)\}_{j=1}^N \gets \{(w_j' , \bm{x}_j')\}_{j=1}^N$
\STATE $t_i \gets \texttt{temp}(\{(w_j , \bm{x}_j)\}_{j=1}^N , t_{i-1})$ (next temp.)
\STATE $\{(w_j' , \bm{x}_j')\}_{j=1}^N \gets \texttt{SMC}(\{(w_j,\bm{x}_j)\}_{j=1}^N , t_i , t_{i-1} , \rho)$ \\ (next particle approx.)
\STATE $\theta' \gets \texttt{kern-param}(f,\{\bm{x}_j'\}_{j=1}^n)$ (kernel params)
\STATE $R \gets \texttt{crit-KL} (f, \Pi , k_{\theta'} , \{\bm{x}_j'\}_{j=1}^N)$ (est'd error)
\ENDWHILE
\STATE $\mathrm{f}_j \gets f(\bm{x}_j)$ (function eval. $\forall j \in 1:n$)
\STATE $z_j \gets \int_{\mathcal{X}} k_\theta(\cdot , \bm{x}_j) \mathrm{d}\Pi$ (kernel mean eval. $\forall j \in 1:n$)
\STATE $\mathrm{K}_{j,j'} \gets k_\theta(\bm{x}_j , \bm{x}_{j'})$ (kernel eval. $\forall j,j' \in 1:n$)
\STATE $\hat{\Pi}(f) \gets \bm{z}^\top \bm{\mathrm{K}}^{-1} \bm{\mathrm{f}}$ (eval. KQ estimator)
{\bf return} $\hat{\Pi}(f)$ (estimator) 
\end{algorithmic}
\end{algorithm}  \FloatBarrier

As stated here, Alg. \ref{SMCKQKL} is inefficient as function evaluations that are produced in the $\texttt{kern-param}$ and $\texttt{crit-KL}$ components are not included in the KQ estimator $\hat{\Pi}(f)$.
Thus a trivial modification is to store all function evaluations $(\mathrm{f}_j , \bm{x}_j)$ that are produced and to include all of these in the ultimate KQ estimator.
This was the approach taken in our experiments that involved \texttt{SMC-KQ-KL}.
However, since it is somewhat cumbersome to include in the pseudo-code, we have not made this explicit in the notation.
Our reported results are on a per-function-evaluation basis and so we {\bf do} adjust for this detail in our reported comparisons.

\end{document}